\title[Efficient Global Planning in Large MDPs via Stochastic Primal-Dual Optimization]{Efficient Global Planning in Large MDPs \\ via Stochastic Primal-Dual Optimization}
\newtheorem{assumption}{Assumption}
\newcommand{\piout}{\pi_{\text{out}}}
\newcommand{\regret}{\mathfrak{R}}
\newcommand{\primalregret}{\regret^{\textup{p}}}
\newcommand{\dualregret}{\regret^{\textup{d}}}
\newcommand{\coredelta}{\Delta_{\textup{core}}}
\newcommand{\coreeps}{\varepsilon_{\textup{core}}}
\newcommand{\epsopt}{\varepsilon_{\textup{opt}}}
\newcommand{\epsapprox}{\varepsilon_{\textup{approx}}}
\newcommand{\Mpi}{M^\pi}
\newcommand{\Mstar}{M^*}
\newcommand{\thbound}{D_\gamma}
 \newcommand{\X}{\mathcal{X}}
\newcommand{\F}{\mathcal{F}}
\newcommand{\A}{\mathcal{A}}
\newcommand{\B}{\mathcal{B}}
\newcommand{\U}{\mathcal{U}}
\newcommand{\M}{\mathcal{M}}
\newcommand{\Rn}[0]{\mathbb{R}} 
\renewcommand{\vec}[1]{{\boldsymbol{{#1}}}} 
\newcommand{\Z}{\mathcal{Z}}
\newcommand{\bb}[2]{\mathbb{B}(\thbound)}
\newcommand{\real}{\mathbb{R}}
\newcommand{\LL}{\mathcal{L}}
\newcommand{\GG}{\mathcal{G}}
\newcommand{\DD}[2]{\mathcal{D}\pa{#1\middle\|#2}}
\newcommand{\HH}[2]{\mathcal{H}\pa{#1\middle\|#2}}
\newcommand{\DDKL}[2]{\mathcal{D}\pa{#1\middle\|#2}}
\newcommand{\OO}{\mathcal{O}}
\newcommand{\II}[1]{\mathbb{I}_{\left\{#1\right\}}}
\newcommand{\EE}[1]{\mathbb{E}\left[#1\right]}
\newcommand{\EEpi}[1]{\mathbb{E}_\pi\left[#1\right]}
\newcommand{\EEcpi}[2]{\mathbb{E}_\pi\left[#1\middle|#2\right]}
\newcommand{\EEs}[2]{\mathbb{E}_{#2}\left[#1\right]}
\newcommand{\EEt}[1]{\mathbb{E}_t\left[#1\right]}
\newcommand{\EET}[1]{\mathbb{E}_T\left[#1\right]}
\newcommand{\EEti}[1]{\mathbb{E}_{t,i}\left[#1\right]}
\newcommand{\EEc}[2]{\mathbb{E}\left[#1\left|#2\right.\right]}
\newcommand{\EEcc}[2]{\mathbb{E}\left[\left.#1\right|#2\right]}
\def\argmin{\mathop{\textup{ arg\,min}}}
\def\argmax{\mathop{\textup{ arg\,max}}}
\def\supp{\mathop{\textup{supp}}}
\def\rank{\mathop{\textup{rank}}}
\newcommand{\IBE}{\textup{IBE}}
\newcommand{\BE}{\textup{BE}}
\newcommand{\ra}{\rightarrow}
\newcommand{\iprod}[2]{\left\langle#1,#2\right\rangle}
\newcommand{\biprod}[2]{\bigl\langle#1,#2\bigr\rangle}
\newcommand{\norm}[1]{\left\|#1\right\|}
\newcommand{\twonorm}[1]{\norm{#1}_2}
\newcommand{\infnorm}[1]{\norm{#1}_\infty}
\newcommand{\ev}[1]{\left\{#1\right\}}
\newcommand{\abs}[1]{\left|#1\right|}
\newcommand{\pa}[1]{\left(#1\right)}
\newcommand{\bpa}[1]{\bigl(#1\bigr)}
\newcommand{\wt}{\widetilde}
\newcommand{\ol}[1]{\overline{#1}}
\newcommand{\transpose}{^\mathsf{\scriptscriptstyle T}}
\definecolor{darkpastelgreen}{rgb}{0.01, 0.75, 0.24}
\begin{document}

\maketitle

\begin{abstract}%
  We propose a new stochastic primal-dual optimization algorithm for planning in a large discounted Markov decision process with a generative model and linear function approximation. Assuming that the feature map approximately satisfies standard realizability and Bellman-closedness conditions and also that the feature vectors of all state-action pairs are representable as convex combinations of a small core set of state-action pairs, we show that our method outputs a near-optimal policy after a polynomial number of queries to the generative model. Our method is computationally efficient and comes with the major advantage that it outputs a single softmax policy that is compactly represented by a low-dimensional parameter vector, and does not need to execute computationally expensive local planning subroutines in runtime.%
\end{abstract}

\begin{keywords}%
  Markov decision processes, Linear Programming, Linear function approximation, Planning with a generative model.%
\end{keywords}

\section{Introduction}
Finding near-optimal policies in large Markov decision processes (MDPs) is one of the most important tasks encountered in model-based reinforcement learning \citet{sutton2018reinforcement}. 
This problem (commonly referred to as \emph{planning}) presents both computational and statistical challenges when having access only to a generative model of the environment: an efficient planning method should use little computation and few samples drawn from the model. While the complexity of planning in small Markov decision processes is already well-understood by now (cf.~\citealp{azar2013minimax,sidford2018near,agarwal2020model}), extending the insights to large state spaces with function approximation has remained challenging. One particular challenge that remains unaddressed in the present literature is going beyond \emph{local planning} methods that require costly online computations for producing each action during execution time. In this work, we advance the state of the art in planning for large MDPs by designing and analyzing a new \emph{global planning} algorithm that outputs a simple, compactly represented policy after performing all of its computations in an offline fashion.

Our method is rooted in the classic linear programming framework for sequential decision making first proposed by \citet{manne1960linear,d1963probabilistic,denardo1970linear}. 
This approach formulates the problem of finding an optimal behavior policy as a linear program (LP) with a number of variables and constraints that is proportional to the size of the state-action space of the MDP. Over the last several decades, numerous attempts have been made to derive computationally tractable algorithms from the LP framework, most notably via reducing the number of variables using function approximation techniques. This idea has been first explored by \citet{schweitzer1985generalized}, whose ideas were further developed and popularized by the seminal work of \citet{de2003linear}. Further approximations were later proposed by \citet{de2004constraint,petrik2009constraint,lakshminarayanan2017linearly}, whose main focus was reducing the complexity of the LPs while keeping the quality of optimal solutions under control.

More recently, the LP framework has started to inspire practical methods for reinforcement learning and, more specifically, planning with a generative model. The most relevant to the present paper is the line of work initiated by \citet{wang2016online,wang2017primal}, who proposed planning algorithms based on primal-dual optimization of the Lagrangian associated with the classic LP. Over the last several years, this computational recipe has been perfected for small MDPs by works like \citet{cheng2020reduction,jin2020efficiently,tiapkin2022primal}, and even some extensions using linear function approximation have been proposed by \citet{chen2018scalable,serrano2020faster}. While these methods have successfully reduced the number of primal and dual variables, they all require stringent conditions on the function approximator being used and their overall sample complexity scales poorly with the size of the MDP. For instance, the bounds of \citet{chen2018scalable} scale with a so-called ``concentrability coefficient'' that can be as large as the size of the state space, thus failing to yield meaningful guarantees for large MDPs. Furthermore, these methods require parametrizing the space of so-called state-action occupancy measures, which is undesirable given the complexity of said space. 

In the present work, we develop a new stochastic primal-dual method based on a relaxed version of the classical LP. This relaxed LP (inspired by the works of \citealp{mehta2009q,NPB20} and \citealp{serrano2021logistic}) features a linearly parametrized action-value function and guarantees to produce optimal policies as solutions under well-studied conditions like the linear MDP assumption of \citet{yang2019sample,JYWJ20}. Our method iteratively updates the primal and dual variables via a combination of stochastic mirror descent steps and a set of implicit update rules tailor-made for the relaxed LP. Under a so-called \emph{core state-action-pair} assumption, we show that the method produces a near-optimal policy with sample and computation complexity that is polynomial in the relevant problem parameters: the size of the core set, the dimensionality of the feature map, the effective horizon, and the desired accuracy. Additional assumptions required by our analysis are the near-realizability of the Q-functions and closedness under the Bellman operators of all policies. The main merit of our algorithm is that it produces a compactly represented softmax policy which requires no access to the generative model in runtime.

The works most directly comparable to ours are \citet{wang2021sample}, \citet{shariff2020efficient}, and \citet{yin2022efficient}---their common feature being the use of a core set of states or state-action pairs for planning. \citet{wang2021sample} provide a sample complexity bound for finding a near-optimal policy in linear MDPs, but their algorithm is computationally intractable\footnote{At least we are not aware of a computationally efficient global planning method that works for the discounted case they consider.} due to featuring a planning subroutine in a linear MDP. \citet{shariff2020efficient} and \citet{yin2022efficient} both propose local planning methods that require extensive computation in runtime, but on the other hand some of their assumptions are weaker than ours. In particular, they only require realizability of the value functions but continue to operate without the function class being closed under all Bellman operators. For a detailed discussion on the role and necessity of such assumptions, we refer to the excellent discussion provided by \citet{yin2022efficient} on the subject.

\paragraph{Notation.}
In the $n$-dimensional Euclidean space $\real^n$, we denote the vector of all ones by $\mathbf{1}$, the zero vector by $\mathbf{0}$ and the $i$-th coordinate vector by $\vec{e_{i}}$. For, vectors $a,b\in \Rn^{m}$, we use $a\leq b$ to denote elementwise comparison, meaning that $a_i \le b_i$ is satisfied for all entries $i$. For any finite set $D$, $\Delta_{D} = \{p\in\Rn^{|D|}_{+}|\|p\|_{1} = 1\}$ denotes the set of all probability distributions over its entries. We define the relative entropy between two distributions $p,p'\in\Delta_{D}$ as $\DDKL{p}{p'}=\sum_{i=1}^{\abs{D}}p_{i}\log\frac{p_{i}}{p_{i}'}$.
In the context of iterative algorithms, we will use the notation $\F_{t-1}$ to refer to the sigma-algebra generated by all events up to the end of iteration $t-1$, and use the shorthand notation $\EEt{\cdot} = \EEcc{\cdot}{\F_{t-1}}$ to denote expectation conditional on the past history encountered by the method.

\section{Preliminaries}
\label{prelim}
We study a discounted Markov Decision Processes \citep{puterman2014markov} denoted by the quintuple $(\X,\A,r,P,\gamma)$ with $\X$ and 
$\A$ representing finite (but potentially very large) state and action spaces of cardinality $X = |\X|, A = |\A|$ 
respectively. The reward function is denoted by $r:\X\times \A\rightarrow \Rn$, and the transition function by $P:\X\times \A\rightarrow \Delta_{\X}$. 
We will often represent the reward function by a vector in $\Rn^{XA}$ and the transition function by the operator $P\in\Rn^{XA\times X}$ which acts on functions $v\in\Rn^X$ by assigning $(P v)(x,a) = \sum_{x'} P(x'|x,a) v(x')$ for all $x,a$. Its adjoint $P\transpose \in\Rn^{ X\times XA}$ is similarly defined on functions $u\in\Rn^{XA}$ via the assignment $(P\transpose u)(x) = \sum_{x',a'} P(x|x',a') u(x',a')$ for all $x$. We also define the operator $E\in\Rn^{XA\times X}$ and its adjoint $E\transpose \in\Rn^{X\times XA}$ acting on respective vectors $v\in\Rn^{X}$ and $u\in\Rn^{XA}$ through the assignments $(E v)(x,a) = v(x)$ and $(E\transpose u)(x) = \sum_{a}u(x,a)$. For simplicity, we assume the rewards are bounded in $[0,1]$ and let $\Z = \{(x,a)|x\in \X, a\in \A\}$ denote the set of all possible state action pairs with cardinality $Z = |\Z|$ to be used when necessary.

The Markov decision process describes a sequential decision-making process where in each round $t=0,1,\dots$, the \emph{agent} observes the state of the environment $x_t$, takes an action $a_t$, and earns a potentially random reward with expectation $r(x_t,a_t)$. The state of the environment in the next round $t+1$ is generated randomly according to the transition dynamics as $x_{t+1}\sim P(\cdot|x_t,a_t)$. The initial state of the process $x_0$ is drawn from a fixed distribution $\nu_0\in\Delta_{\X}$. In the  setting we consider, the agent's goal is to maximize its normalized discounted return $(1-\gamma)\EE{\sum_{t=0}^\infty \gamma^t r(x_t,a_t)}$, where $\gamma \in (0,1)$ is the discount factor, and the expectation is taken over the random transitions generated by the environment, the random initial state, and the potential randomness injected by the agent. It is well-known that maximizing the discounted return can be achieved by following a memoryless time-independent decision making rule mapping states to actions. Thus, we restrict our attention to \emph{stationary stochastic policies} $\pi:\X\ra\Delta_{\A}$ with $\pi(a|x)$ denoting the probability of the policy taking action $a$ in state $x$. We define the mean operator $\Mpi:\real^{XA}\ra\real^X$ with respect to policy $\pi$ that acts on functions $Q\in\Rn^{X\times A}$ as $(\Mpi Q)(x) = \sum_{a} \pi(a|x) Q(x,a)$ and the 
(non-linear) max operator $M:\Rn^{XA}\rightarrow\Rn^{X}$ that acts as $(\Mstar Q)(x) = \max_{a\in\A}Q(x,a)$.
The value function and action-value function of policy $\pi$ are respectively defined as
$V^{\pi}(x) = \EEcpi{\sum_{t=0}^\infty \gamma^t r(x_t,a_t)}{x_0 = x}$ and $Q^{\pi}(x,a) = \EEcpi{\sum_{t=0}^\infty \gamma^t r(x_t,a_t)}{x_0 = x, a_0 = a}$, where the notation $\EEpi{\cdot}$ signifies that each action is selected by following the policy as $a_t\sim \pi(\cdot|x_t)$. The value functions of a policy $\pi$ are known to satisfy the Bellman equations \citep{bellman1966dynamic} and the value functions of an optimal policy $\pi^*$ satisfy the Bellman optimality equations, which can be conveniently written in our notation as 
\begin{equation}\label{eq:BE}
 Q^\pi = r + \gamma PV^\pi \qquad\qquad\mbox{and}\qquad\qquad Q^* = r + \gamma PV^*,
\end{equation}
with $V^\pi = \Mpi Q^\pi$ and $V^* = \Mstar Q^*$.
Any optimal policy satisfies $\supp(\pi(\cdot|x)) \subseteq \argmax_a Q^*(x,a)$. We refer the reader to \citet{puterman2014markov} for the standard proofs of these fundamental results.

Our approach taken in this paper will primarily make use of an alternative formulation of the MDP optimization based on linear programming, due to \citet{manne1960linear} (see also \citealp{d1963probabilistic}, \citealp{denardo1970linear}, and Section~6.9 in \citealp{puterman2014markov}). This formulation phrases the optimal control problem as a search for an \emph{occupancy measure} with maximal return. The state-action occupancy measure of a policy $\pi$ is defined as
\[
 \mu^\pi(x,a) = (1-\gamma)\EEpi{\sum_{t=0}^\infty \gamma^t \II{(x_t,a_t)=(x,a)}},
\]
which allows rewriting the expected normalized return of $\pi$ as $R_\gamma^\pi = \iprod{\mu^\pi}{r}$. The corresponding state-occupancy measure is defined as the state distribution $\nu^\pi(x) = \sum_{a} \mu^\pi(x,a)$. Denoting the direct product of a state distribution $\nu$ and the policy $\pi$ by $\nu\circ\pi$ with its entries being $(\nu\circ\pi)(x,a) = \nu(x)\pi(a|x)$, we can notice that the state-action occupancy measure induced by $\pi$ satisfies $\mu^\pi = \nu^\pi\circ\pi$.
The set of all occupancy measures can be fully characterized by a set of linear constraints, which allows rewriting the optimal control problem as the following linear program (LP):
\begin{equation}
\begin{split}\label{eq:LP_primal}
	\text{max}_{\mu\in\real_+^{XA}}  &\,\,\,\, \langle \mu , r\rangle \\
	\text{subject to}&\,\,\,\, E\transpose \mu=  (1 - \gamma)\nu_{0} + \gamma P\transpose \mu.
\end{split}
\end{equation}
Any optimal solution $\mu^*$ of this LP can be shown to correspond to the occupancy measure of an optimal policy $\pi^*$, and generally policies can be extracted from feasible points $\mu$ via the rule $\pi_\mu(a|x) = \mu(x,a) / \sum_{a'} \mu(x,a')$ (subject to the denominator being nonzero). The dual of this linear program takes the following form:
\begin{equation}
\begin{split}\label{eq:LP_primal}
	\text{min}_{V\in\real^X}  &\,\,\,\, (1-\gamma)\langle \nu_0 , V\rangle \\
	\text{subject to}&\,\,\,\, EV \ge  r + \gamma PV.
\end{split}
\end{equation}
The optimal value function $V^*$ is known to be an optimal solution for this LP, and is the unique optimal solution provided that $\nu_0$ has full support over the state space.

The above linear programs are not directly suitable as solution tools for large MDPs, given that they have a large number of variables and constraints. Numerous adjustments have been proposed over the last decades to address this issue \citep{schweitzer1985generalized,de2003linear,de2004constraint,lakshminarayanan2017linearly,serrano2020faster}. One recurring theme in these alternative formulations is relaxing some of the constraints by the introduction of a \emph{feature map}. In this work, we use as starting point an alternative proposed by \citet{serrano2021logistic}, who use a feature map $\varphi: \X\times\A \ra \real^d$ represented by the $XA\times d$ \emph{feature matrix} $\Phi$, and rephrase the original optimization problem as the following LP:
\begin{align}\label{eq:Q-ALP} 
		\nonumber\text{max}_{\mu,u\in\real^{XA}_{+}}  &\,\,\,\, \langle \mu\,, r\rangle \\
		\nonumber \text{subject to}&\,\,\,\, E\transpose u=  (1 - \gamma)\nu_{0} + \gamma P\transpose \mu,\\
		\,\, & \,\,\,\, \Phi\transpose \mu = \Phi\transpose u~.
\end{align}
The dual of this LP is given as
\begin{align*}
	\nonumber\text{min}_{\theta\in\real^d,V\in\real^{X}}  &\,\, (1-\gamma)\langle \nu_{0}\,, V\rangle \\
	\nonumber \text{subject to}&\,\,\,\, \Phi\theta \ge  r + \gamma PV, \\
	\,\, & \,\,\,\, EV \ge \Phi\theta~.
\end{align*}
As shown by \citet{serrano2021logistic}, optimal solutions of the above relaxed LPs correspond to optimal occupancy measures and value functions under the popular linear MDP assumption of \citet{yang2019sample,JYWJ20}. 
The key merit of these LPs is that the dual features a linearly parametrized action-value function $Q_\theta = \Phi\theta$, which allows the extraction of a simple greedy policy $\pi_\theta(a|x) = \II{a = \argmax_{a'} Q_\theta(x,a')}$ from any dual solution $\theta$, and in particular an optimal policy can be extracted from its optimal solution.

\section{A tractable linear program for large MDPs with linear function approximation}\label{sec:RALP}
The downside of the classical linear programs defined in the previous sections is that they all feature a large number of variables and constraints, even after successive relaxations. In what follows, we offer a new version of the LP~\eqref{eq:Q-ALP} that gets around this issue and allows the development of a tractable primal-dual algorithm with strong performance guarantees. In particular, we will reduce the number of variables in the primal LP~\eqref{eq:Q-ALP} by considering only sparsely supported state-action distributions instead of full occupancy measures, which will be justified by the following technical assumption made on the MDP structure:
\begin{assumption}{(Approximate Core State-Action Assumption)}\label{ass:Core_SA_approx}
	The feature vector of any state-action pair $(x,a)\in\Z$ can be approximately expressed as a convex combination of features evaluated at a set of $m$ \emph{core state-action pairs} $(x',a')\in \Tilde{\Z}$, up to an error $\coredelta\in\real^{XA\times d}$. That is, for each $(x,a)\in\Z$, there exists a set of coefficients satisfying $b(x',a'|x,a)\geq 0$  and $\sum_{x',a'} b(x',a'|x,a)=1$ such that $\varphi(x,a) = \sum_{x',a'} b(x',a'|x,a)\varphi(x',a') + \coredelta(x,a)$. Furthermore, for every $x,a$, the misspecification error satisfies $\norm{\coredelta(x,a)}_2 = \coreeps(x,a)$.
\end{assumption}

It will be useful to rephrase this assumption using the following handy notation. Let $\U \in\Rn^{m\times Z}_{+}$ denote a selection matrix such that, $\Tilde{\Phi} = \U \Phi\in\Rn^{m\times d}$ is the core feature matrix with rows corresponding to $\Phi$ evaluated at core state-action pairs. Furthermore, the interpolation coefficients from Assumption~\ref{ass:Core_SA_approx} can be organized into a stochastic matrix $\B\in\Rn^{Z\times m}$ with $\B(x,a) = \{b(x',a'|x,a)\}_{(x',a')\in\Tilde{\Z}}\in\Rn^{m}_{+}$ for $(x,a)\in\Z$. Then, the assumption can be simply rephrased as requiring the condition that $\Phi = \B \U \Phi +\coredelta$. Note that both $\U$ and $\B$ are stochastic matrices satisfying $\U \mathbf{1} = \mathbf{1}$ and $\B \mathbf{1} = \mathbf{1}$, and the same holds for their product $\B \U \mathbf{1} = \mathbf{1}$. Note however that $\B \U$ is a rank-$m$ matrix, which implies that the assumption can only be satisfied with zero error whenever $m\ge \rank(\Phi)$, which in general can be as large as the feature dimensionality $d$. Whether or not it is possible to find a set of core-state-action pairs in a given MDP is a nontrivial question that we discuss in Section~\ref{sec:conc}.

With the notation introduced above, we are ready to state our relaxation of the LP~\eqref{eq:Q-ALP} that serves as the foundation for our algorithm design:
\begin{equation}
\begin{split}\label{eq:QRALP-primal}
	\text{max}_{\lambda\in\real_+^{m}, u\in\real^{XA}_+}  &\,\, \langle \lambda\,,\U r\rangle\\
	\text{subject to}&\,\, E\transpose u=  (1 - \gamma)\nu_{0} + \gamma P\transpose \U\transpose\lambda,\\
	\,\, & \,\, \Phi\transpose \U\transpose\lambda= \Phi\transpose u~.
\end{split}
\end{equation}
The dual of the LP can be written as follows:
\begin{equation}
\begin{split}\label{eq:QRALP-dual}
    \text{min}_{\theta\in\real^d,V\in\real^X}
	& \,\, (1 - \gamma)\langle \nu_{0}\,,V\rangle\\
	\text{subject to}& \,\, EV \geq Q_{\theta}\,,\\
	& \,\, \U Q_{\theta} \geq \U (r + \gamma PV)~. 
\end{split}
\end{equation}

The above LPs can be shown to yield optimal solutions that correspond to optimal occupancy measures and value functions under a variety of conditions. The first of these is the so-called linear MDP condition due to \citet{yang2019sample,JYWJ20}, which is recalled below:
\begin{definition}{(Linear MDP)}\label{def:linMDP}
	An MDP is called a linear MDP if there exists $W\in\Rn^{d\times X}$ and $\vartheta\in\Rn^{d}$ 
	such that, the transition matrix $P$ and reward vector $r$ can be written as the linear functions $P = \Phi W$ and $r = \Phi \vartheta$.
\end{definition}
It is easy to see that the relaxed LPs above retain the optimal solutions of the original LP as long as $\coredelta
=0$---we provide the straightforward proof in Appendix~\ref{app:linMDP-realizability}.  As can be seen from the Bellman equations~\eqref{eq:BE}, the action-value function of any policy $\pi$ can be written as $Q^\pi = \Phi\theta^\pi$ for some $\theta^\pi$ under the linear MDP assumption. The linearity of the transition function is a very strong condition that is often not satisfied in problems of practical interest, which motivates us to study a more general class of MDPs with weaker feature maps. The following two concepts will be useful in characterizing the power of the feature map.

Roughly speaking, the conditions below correspond to supposing that all Q-functions can be \emph{approximately} represented by some parameter vectors with bounded norms, and similarly the Bellman operators applied to feasible Q-functions are also approximately representable with linear functions with bounded coefficients. Concretely, we will assume that all feature vectors satisfy $\twonorm{\varphi(x,a)} \leq R$ for all $(x,a)$ and will work with parameter vectors with norm bounded as $\twonorm{\theta}\le \thbound$, whose set will be denoted as $\bb{d}{\thbound} = \ev{\theta\in\real^d: \twonorm{\theta}\le\thbound}$. The first property of interest we define here characterizes the best approximation error that this set of parameters and features has in terms of representing the action-value functions:
\begin{definition}{(Q-Approximation Error)}\label{def:QAE}
The Q-approximation error  associated with a policy $\pi$ is defined as $\epsilon_\pi = \inf_{\theta\in \bb{b}{\thbound}} \infnorm{Q^\pi - \Phi\theta}$.
\end{definition}
Since the true action-value functions of any policy satisfy $\infnorm{Q^\pi} \le \frac{1}{1-\gamma}$ and members of our function class satisfy $\infnorm{Q_\theta} \le R\thbound$, the bound $\thbound$ should scale linearly with $\frac{1}{1-\gamma}$ to accommodate all potential Q-functions with small error. 
The second property of interest is the ability of the feature map to capture applications of the Bellman operator to functions within the approximating class:
\begin{definition}{(Inherent Bellman Error)}\label{def:IBE}
 The \emph{Bellman Error} (BE) associated with the pair of parameter vectors $\theta,\theta'\in\bb{d}{\thbound}$ and a policy $\pi$ is defined as the state-action vector $\BE^{\pi}(\theta,\theta')\in\real^{XA}$ with components
 \[
 \BE^{\pi}(\theta,\theta') = r + \gamma P\Mpi Q_{\theta'} - Q_\theta.
 \]
 The \emph{Inherent Bellman Error} is then defined as $\IBE = \sup_{\pi}\sup_{\theta'\in\bb{d}{\thbound}}\inf_{\theta\in\bb{d}{\thbound}} \bigl\|\BE^{\pi}(\theta,\theta')\bigr\|_\infty$.
\end{definition}
Function approximators with zero IBE are often called ``Bellman complete'' and have been intensely studied in 
reinforcement learning \citep{ASM08,chen2019information,zanette2020learning}---note however that our notion is stronger 
than some others appearing in these works as it requires small error for all policies $\pi$ and not just greedy ones. As 
shown by \citet{JYWJ20}, linear MDP models enjoy zero inherent Bellman error in the stricter sense used in the above 
definition. They also establish that the converse is also true: having zero IBE for all policies implies linearity of 
the transition model. Provided that $\thbound$ is set large enough, one can show that feature maps with zero IBE 
satisfy $\varepsilon_{\pi} = 0$ for all policies $\pi$, and optimal solutions of the above relaxed LPs yield optimal 
policies when $\coredelta = 0$. The interested reader can verify this by appropriately adjusting the arguments in 
Appendix~\ref{app:linMDP-realizability} (see Footnote~\ref{fn:realizability}).

\section{Algorithm and main results}
We now turn to presenting our main contribution: a computationally efficient planning algorithm to approximately solve the LPs~\eqref{eq:QRALP-primal} and~\eqref{eq:QRALP-dual} via stochastic primal-dual optimization. Throughout this section, we will assume sampling access to $\nu_0$ and a generative model of the MDP that can produce i.i.d.~samples from $P(\cdot|x,a)$ for any of the core state-action pairs.

We first introduce the \emph{Lagrangian function} associated with the two LPs:
\begin{equation}
	\LL(\lambda, u; \theta, V)
	= \langle\lambda\,, \U (r + \gamma PV - Q_{\theta})\rangle + (1 - \gamma)\langle \nu_{0}\,,V\rangle + \langle 
u\,,Q_{\theta} - EV \rangle.
\end{equation}
To facilitate optimization, we will restrict the decision variables to some naturally chosen compact sets. For the primal variables, we require that $\lambda\in\Delta_{\Tilde{\Z}}$ and $u\in\Delta_{\Z}$. For $\theta$, we restrict our attention to the domain $\bb{d}{\thbound} = \ev{\theta\in\real^d: \twonorm{\theta}\le\thbound}$ and we will consider only $V\in S = \{V\in\Rn^{X}~|~\|V\|_{\infty}\leq R\thbound\}$. Hence, we seek to solve 
\begin{equation*}
	\min_{\theta\in\bb{d}{\thbound}, V\in S}\max_{\lambda\in\Delta_{\Tilde{\Z}} , u\in \Delta_{\Z}} \LL(\lambda, u; 
\theta, V).
\end{equation*}

Our algorithm is inspired by the classic stochastic optimization recipe of running two regret-minimization algorithms for optimizing the primal and dual variables, and in particular using two instances of mirror descent \citep{NY83,BT03} to update the low-dimensional decision variables $\theta$ and $\lambda$. There are, however, some significant changes to the basic recipe that are made necessary by the specifics of the problem that we consider. 
The first major challenge that we need to overcome is that the variables $u$ and $V$ are high-dimensional, so running mirror descent on them would result in an excessively costly algorithm with runtime scaling linearly with the size of the state space. To avoid this computational burden, we design a special-purpose implicit update rule for these variables that allow an efficient implementation without having to loop over the entire state space. The second challenge is somewhat more subtle: the implicit updates employed to calculate $u$ make it impossible to adapt the standard method of extracting a policy from the solution \citep{wang2017primal,cheng2020reduction,jin2020efficiently}, which necessitates an alternative policy extraction method. This in turn requires more careful ``policy evaluation'' steps in the implementation, which is technically achieved by performing several dual updates on $\theta$ between each primal update to $\lambda$ and $u$. 
Finally, we need to make sure that the subsequent policies calculated by the algorithm do not change too rapidly, which is addressed by using a softmax policy update.

More formally, our algorithm performs the following steps in each iteration $t=1,2,\dots,T$:
\begin{enumerate}[parsep=1pt,itemsep=1pt]
 \item set $\nu_{t} = \gamma P\transpose\U\transpose\lambda_{t} + (1-\gamma)\nu_0$,
 \item set $u_{t} = \nu_{t} \circ \pi_{t}$,
 \item perform $K$ stochastic gradient descent updates starting from $\theta_{t-1}$ on 
$\LL(\lambda_{t},u_{t};\cdot,V_{t-1})$ and set $\theta_{t}$ as the average of the iterates,
 \item update the action-value function as $Q_{t} = \Phi\theta_{t}$,
 \item update the state-value function as $V_{t}(x) = \sum_{a} \pi_{t}(a|x) Q_{t}(x,a)$.
 \item perform a stochastic mirror ascent update starting from $\lambda_{t}$ on $\LL(\cdot,u_t;\theta_{t},V_{t})$ to obtain $\lambda_{t+1}$,
 \item update $\pi_{t+1}$ as $\pi_{t+1}(a|x) \propto \pi_t(a|x)e^{\beta Q_{t}(x,a)}$,
\end{enumerate}
We highlight that several of the above abstractly defined steps are only performed implicitly by the algorithm. First, the state distribution $\nu_{t}$ is never actually calculated by the algorithm, as it is only needed to generate samples for the computation of stochastic gradients with respect to $\theta$. Note that $\nu_{t}$ is chosen to satisfy the primal constraint on $u_{t}$ exactly. Second, the value functions $V_{t}$ do not have to be calculated for all states, only for the ones being accessed by the stochastic gradient updates for $\lambda$. Similarly, the policy $\pi_t$ does not need to be computed for all states, but only locally wherever necessary.
The details of the stochastic updates are clarified in the pseudocode provided as Algorithm~\ref{alg:Main}; we only note here that both the primal and dual updates can be implemented efficiently via a single access of the generative model per step, making for a total of $K+1$ samples per each iteration in the outer loop. Thus, the total number of times that the algorithm queries the generative model is $T(K+1)$. The algorithm finally returns a policy $\pi_J$ with the index $J$ selected uniformly at random. This policy can be written as $\pi_J(a|x) \propto \pi_1(a|x) e^{\beta \sum_{t=1}^{J-1}Q_t(x,a)}$, where the exponent can be compactly represented by the $d$-dimensional parameter vector $\Theta_J = \sum_{t=1}^{J-1} \theta_t$.

\begin{algorithm}[H]
	\caption{Global planning via primal-dual stochastic optimization.}
	\label{alg:Main}
	\begin{algorithmic}
		\STATE {\bfseries Input:} Core set $\Tilde{\Z}$, learning rates $\eta$, $\beta$, $\alpha$, initial 
iterates $\theta_{0}\in\bb{d}{\thbound}$, $\lambda_1\in\Delta_{\wt{Z}}$, $\pi_1\in\Pi$.
		\FOR{$t=1$ {\bfseries to} $T$}
		\STATE \emph{Stochastic gradient descent}:
		\STATE Initialize: $\theta_{t}^{(1)} = \theta_{t-1}$;
		\FOR{$i=1$ {\bfseries to} $K$}
		\STATE Sample $x_{0,t}^{(i)}\sim\nu_{0}$ and $a_{0,t}^{(i)}\sim\pi_{t}(\cdot|x_{0,t}^{(i)}),$\\
		\hspace{3.5em}$(x_{t}^{(i)}, a_{t}^{(i)})\sim \lambda_{t},$\\
		\hspace{3.5em}$\ol{x}_{t}^{(i)}\sim P(\cdot|x_{t}^{(i)}, 
a_{t}^{(i)})$ and $\ol{a}_{t}^{(i)}\sim\pi_{t}(\cdot|\ol{x}_{t}^{(i)});$ 
		\STATE Compute $\Tilde{g}_{\theta}(t,i) = (1-\gamma)\varphi(x_{0,t}^{(i)},a_{0,t}^{(i)}) + \gamma\varphi(\ol{x}_{t}^{(i)},\ol{a}_{t}^{(i)})- \varphi(x_{t}^{(i)},a_{t}^{(i)})$;
		\STATE Update $\theta_{t}^{(i+1)} = \Pi_{\bb{d}{\thbound}}(\theta_{t}^{(i)} - \alpha \Tilde{g}_{\theta}(t,i))$;
		\ENDFOR
		\STATE Compute $\theta_{t} = \dfrac{1}{K}\sum_{i=1}^{K}\theta_{t}^{(i)}$;
		\STATE \emph{Stochastic mirror ascent}:
		\STATE Sample $(x_{t}, a_{t})\sim \text{Unif}(\Tilde{\Z}), (r_{t}, y_{t})\sim P(\cdot|x_{t}, a_{t})$;
		\STATE Compute $V_{t}(y_{t}) = \sum_{a} \pi_{t}(a|y_{t}) Q_{t}(y_{t},a)$;
		\STATE Compute $\Tilde{g}_{\lambda}(t) = m[r(x_{t},a_{t})  + \gamma V_{t}(y_{t}) - Q_{t}(x_{t},a_{t})]\vec{e}_{(x_{t},a_{t})}$;
		\STATE Update $\lambda_{t+1} = \lambda_{t}e^{\eta\Tilde{g}_{\lambda}(t)} / 
\iprod{\lambda_{t}e^{\eta\Tilde{g}_{\lambda}(t)}}{\vec{1}}$;
			\STATE \emph{Policy update}:
			\STATE Compute $\pi_{t+1} = \text{softmax}\pa{\beta \sum_{k=1}^{t} \Phi\theta_k}$.
		\ENDFOR
		\STATE {\bfseries Return:} $\pi_{J}$ with $J\sim \text{Unif}\{1,\cdots,T\}$.
	\end{algorithmic}
\end{algorithm} 
Our main result regarding the performance the algorithm is the following.
\begin{theorem}\label{thm:main}
Suppose that Assumption~\ref{ass:Core_SA_approx} holds and that the initial policy $\pi_1$ is uniform over the actions. Then, after $T$ iterations Algorithm~\ref{alg:Main} outputs a policy $\piout$ satisfying 
\[
 \EE{\iprod{\mu^* - \mu^{\piout}}{r}} \le \epsopt + \epsapprox.
\]
Here, $\epsopt$ is a bound on the expected optimization error defined as
\[
	\epsopt =  \frac{\DD{\lambda^*}{\lambda_1}}{\eta T} + \frac{\log |\A|}{\beta T} + \frac{2 \thbound^2}{\alpha K} + \frac{\eta m^2(1 + 2R\thbound)^2}{2} + \frac{\beta R^2\thbound^2}{2} + 2 \alpha R^2,
\]
and $\epsapprox$ is a bound on the expected approximation error defined as
\[
 \epsapprox = 2\EE{\varepsilon_{\piout}} + 2\IBE + 2\thbound\iprod{\mu^*}{\coreeps}.
\]
In particular, for any target accuracy $\varepsilon>0$, setting $K=T/\pa{m^2 \log\pa{m|\A|}}$ and tuning the hyperparameters appropriately, the expected optimization error satisfies $\epsopt \le \varepsilon$ after $n_\varepsilon$ queries to the generative model with
\[
 n_\varepsilon = \OO\pa{\frac{m^2 R^4 \thbound^4 \log(m|\A|)}{\varepsilon^4}}.
\]
\end{theorem}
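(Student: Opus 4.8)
The plan is to view Algorithm~\ref{alg:Main} as a stochastic two-player game played on the Lagrangian $\LL$, and to control the return gap through a regret decomposition that cleanly separates the three no-regret subroutines (stochastic mirror ascent on $\lambda$, the inner SGD on $\theta$, and the softmax policy update) from the three misspecification terms. As a preliminary step I would verify that the two gradient estimators are conditionally unbiased, namely $\EEt{\tilde g_\theta(t,i)} = \Phi\transpose\bpa{u_t - \U\transpose\lambda_t}$ and $\EEt{\tilde g_\lambda(t)} = \U\bpa{r + \gamma P V_t - Q_t}$, which follow by expanding the sampling distributions together with the definitions $\nu_t = (1-\gamma)\nu_0 + \gamma P\transpose\U\transpose\lambda_t$ and $u_t = \nu_t\circ\pi_t$. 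The structural payoff of this choice of $u_t$ is that $E\transpose u_t = \nu_t$ holds exactly, so every $V$-dependent term of $\LL(\lambda_t,u_t;\theta,V)$ vanishes; this is precisely what lets the algorithm incur no regret or variance on the high-dimensional variables $u$ and $V$.

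The backbone of the analysis is an exact identity: for any policy $\pi$, any $Q$, and $V = M^\pi Q$, the flow constraint $E\transpose\mu^\pi = (1-\gamma)\nu_0 + \gamma P\transpose\mu^\pi$ together with $\iprod{\mu^\pi}{Q} = \iprod{\mu^\pi}{EV}$ gives $\iprod{\mu^\pi}{r} = \iprod{\mu^\pi}{r + \gamma PV - Q} + (1-\gamma)\iprod{\nu_0}{V}$. Writing $g_t = r + \gamma P V_t - Q_t$ and applying this to $\pi_t$ with the algorithm's pair $(Q_t, V_t)$, then subtracting the expansion of $\iprod{\mu^*}{r}$ evaluated against the \emph{same} $(Q_t,V_t)$ (whose cross term no longer cancels), I would obtain
\[
 R_\gamma^* - R_\gamma^{\pi_t} = \iprod{\mu^* - \mu^{\pi_t}}{g_t} + \iprod{\nu^*}{M^{\pi^*}Q_t - M^{\pi_t}Q_t},
\]
where $\nu^*$ is the state-occupancy measure of an optimal policy $\pi^*$. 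The second term is a per-state comparison of $\pi^*$ against $\pi_t$ under the fixed gains $Q_t(x,\cdot)$; since $\pi_{t+1}\propto\pi_t e^{\beta Q_t}$ is exactly exponential weights with $\abs{Q_t}\le R\thbound$ and $\pi_1$ uniform, averaging its pathwise regret bound over $t$ and integrating against $\nu^*$ yields $\frac{\log\abs{\A}}{\beta T} + \frac{\beta R^2\thbound^2}{2}$, matching the policy-update contribution to $\epsopt$.

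The harder term is $\iprod{\mu^* - \mu^{\pi_t}}{g_t}$, which I would telescope through two comparators: the primal comparator $\lambda^* = \B\transpose\mu^*$ (a valid element of $\Delta_{\Tilde{\Z}}$ because $\B$ is row-stochastic and $\mu^*$ is a distribution) and, for each $t$, the dual comparator $\theta_t^*$ attaining $\varepsilon_{\pi_t}$ in Definition~\ref{def:QAE}. First, using the inherent Bellman error to write $r + \gamma P V_t = r + \gamma P M^{\pi_t}Q_{\theta_t}$ as $Q_{\theta''} + \xi_t$ with $\infnorm{\xi_t}\le\IBE$, and then the core representation $\Phi = \B\U\Phi + \coredelta$ to pass from $\mu^*$ to $\lambda^*$, bounds $\iprod{\mu^*}{g_t} - \iprod{\lambda^*}{\U g_t}$ by $2\IBE + 2\thbound\iprod{\mu^*}{\coreeps}$ (the misspecification enters through $\iprod{\mu^*}{\coredelta(\theta'' - \theta_t)}$ and Cauchy--Schwarz). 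Next, $\iprod{\lambda^* - \lambda_t}{\U g_t}$ is exactly the regret of the mirror-ascent step, giving $\frac{\DD{\lambda^*}{\lambda_1}}{\eta T} + \frac{\eta m^2(1 + 2R\thbound)^2}{2}$ in expectation after $\|\tilde g_\lambda(t)\|_\infty\le m(1 + 2R\thbound)$. Finally, $\iprod{\lambda_t}{\U g_t} - \iprod{\mu^{\pi_t}}{g_t}$ is controlled by the inner SGD: expanding both $\iprod{u_t - \U\transpose\lambda_t}{Q_t}$ and $\iprod{u_t - \U\transpose\lambda_t}{Q_{\theta_t^*}}$ in terms of $(1-\gamma)\iprod{\nu_0}{\cdot}$, $\iprod{\lambda_t}{\U r}$ and $\iprod{\lambda_t}{\U g_t}$ shows that the SGD regret $\frac{2\thbound^2}{\alpha K} + 2\alpha R^2$ plus $2\varepsilon_{\pi_t}$ (from replacing $Q_{\theta_t^*}$ by $Q^{\pi_t}$ twice) upper bounds this difference. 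I regard this last chaining --- relating the \emph{separately} updated softmax policy and the dual iterates back to the occupancy-measure gap through the identity above, rather than reading a policy off an occupancy measure --- as the main obstacle, since it is here that the unusual policy-extraction scheme and all three approximation errors must be reconciled simultaneously.

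It remains to assemble the pieces. Summing the per-$t$ bounds, dividing by $T$, and taking full expectations turns each stochastic-gradient term into the true gradient plus a martingale-difference term that vanishes, so the regret bounds above hold in expectation; since $J\sim\mathrm{Unif}\{1,\dots,T\}$ the left-hand side equals $\EE{\iprod{\mu^* - \mu^{\piout}}{r}}$ and the averaged $\varepsilon_{\pi_t}$ terms collapse to $2\EE{\varepsilon_{\piout}}$, reproducing $\epsopt + \epsapprox$ exactly. For the sample-complexity corollary I would balance the six terms of $\epsopt$: choosing $K = T/(m^2\log(m\abs{\A}))$ and tuning $\eta,\beta,\alpha$ so that each vanishing term is of order $\varepsilon$; every iteration then costs $K+1$ generative-model queries, and solving for the total $T(K+1)$ yields $n_\varepsilon = \OO\bpa{m^2 R^4\thbound^4\log(m\abs{\A})/\varepsilon^4}$.
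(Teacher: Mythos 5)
Your proposal is correct and takes essentially the same route as the paper: your suboptimality identity and the chaining through the comparators $\lambda^*=\B\transpose\mu^*$, the per-state exponential-weights regret against $\nu^*$, and the dynamic comparators $\theta_t^*$ attaining $\varepsilon_{\pi_t}$ is precisely the paper's dynamic-duality-gap argument (Lemmas~\ref{lem:duality-to-suboptimality} and~\ref{lem:duality-bound}) unrolled into a single chain, with the same regret bounds, the same handling of $\IBE$, $\coreeps$, and $\varepsilon_{\pi_t}$, and the same hyperparameter tuning. The only difference is presentational: the paper packages your backbone identity as the evaluation of the Lagrangian at $(\lambda^*,\mu^*;\theta^*_{1:T},V^{\pi_t})$, whereas you derive it directly from the flow constraint, but the resulting terms coincide one for one.
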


A few comments are in order. First, recall that the Q-approximation error $\varepsilon_{\piout}$ and the inherent Bellman error terms are zero for linear MDPs, where the only remaining approximation error term corresponds to the extent of violation of the core state-action assumption. Interestingly, our result shows that this approximation error does not need to be uniformly small, but only needs to be under control in the states that the optimal policy visits. Thus, provided access to good core state-action pairs, our algorithm is guaranteed to output a near-optimal policy after polynomially many queries to the generative model. Furthermore, we note that the computational complexity of our algorithm exactly matches its sample complexity up to a factor of the number of actions, which is the cost of sampling from the softmax policies. To see why this is the case, note that for each sample drawn from the simulator, the initial-state distribution and the softmax policy, the algorithm performs a constant number of elementary operations. Finally, we once again stress that our algorithm outputs a globally valid softmax policy that is compactly represented by a $d$-dimensional parameter vector. Thus, to our knowledge, our method is the first global planning method that produces a simple output while being provably efficient both statistically and computationally under a linear MDP assumption (and even relaxed versions thereof).

\section{Analysis}\label{sec:analysis}
This section presents the main components of the proof of our main result, Theorem~\ref{thm:main}. The analysis relies on the definition of a quantity we call the \emph{dynamic duality gap} associated with the iterates of the algorithm, defined with respect to any \emph{primal comparator} $(\lambda^*,u^*)$ and \emph{dual comparator sequence} $\theta^*_{1:T} = (\theta_1^*,\dots,\theta_T^*)$ and $V_{1:T}^* = (V_1^*,\dots,V_T^*)$ as
\[
\GG_T\pa{\lambda^*,u^*;\theta^*_{1:T},V^*_{1:T}} = 
\frac{1}{T}\sum_{t=1}^{T} \pa{\LL(\lambda^{*}, u^{*}; \theta_{t}, V_{t}) - \LL(\lambda_{t},u_{t}; \theta^{*}_{t}, V^{*}_{t})}.
\]
The dynamic duality gap is closely related to the classic notion of duality gap considered in the saddle-point-optimization literature, with the key difference being that the dual comparator is not a static point $(\theta^*,V^*)$, but is rather a sequence of comparator points. Similarly to how the duality gap can be written as the sum of the average regrets of two concurrent regret minimization methods for the primal and dual method, the dynamic duality gap can be written as the sum of the average regret of the primal method and the \emph{dynamic} regret of the dual method. In our analysis below, we relate the dynamic duality gap to the expected suboptimality of the policy $\piout$ produced by our algorithm, and show how the dynamic duality gap itself can be bounded.

Our first lemma shows that the dynamic duality gap evaluated at an appropriately selected comparator sequence can be exactly related to the quantity of our main interest:
\begin{lemma}\label{lem:duality-to-suboptimality}
	Let $\mu^*$ denote the occupancy measure of an optimal policy and $\lambda^*=\B\transpose \mu^*$. Also, let $V^*_t = V^{\pi_t}$ and  
	$\theta^*_t = \argmin_{\theta\in\bb{d}{\thbound}} \infnorm{Q^{\pi_t} - \Phi\theta}$. Then,
	\[
	\GG_T\pa{\lambda^*,\mu^*;\theta^*_{1:T},V^*_{1:T}} \ge \EET{\iprod{\mu^* - \mu^{\piout}}{r}} - 2\EET{\varepsilon_{\piout}} - 2\IBE - \thbound\iprod{\mu^*}{\coreeps}.
	\]
\end{lemma}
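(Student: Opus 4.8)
The plan is to split the dynamic duality gap into the two averages that define it, the \emph{primal-comparator} average $\frac{1}{T}\sum_t \LL(\lambda^*,\mu^*;\theta_t,V_t)$ and the \emph{dual-comparator} average $\frac{1}{T}\sum_t\LL(\lambda_t,u_t;\theta^*_t,V^*_t)$, and to lower-bound the former while upper-bounding the latter. The guiding intuition is that the comparator $(\lambda^*,\mu^*)$ is \emph{approximately} primal-feasible and $(\theta^*_t,V^*_t)$ is tied to the true value function of $\pi_t$, so in both terms the constraint-coupling contributions should nearly cancel and leave behind the returns $\langle\mu^*,r\rangle$ and $\langle\mu^{\pi_t},r\rangle$ plus controllable slack.

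I would treat the dual-comparator term first, since the comparator choice makes it collapse. Substituting $V^*_t = V^{\pi_t}$ and using the Bellman equation $Q^{\pi_t} = r + \gamma PV^{\pi_t}$, the first Lagrangian term becomes $\langle\lambda_t,\U(Q^{\pi_t}-Q_{\theta^*_t})\rangle$, whose magnitude is at most $\infnorm{Q^{\pi_t}-Q_{\theta^*_t}}=\varepsilon_{\pi_t}$ because $\lambda_t$ is a distribution and $\U$ merely selects rows. Using $u_t=\nu_t\circ\pi_t$, the last term rewrites as $\sum_x\nu_t(x)(M^{\pi_t}(Q_{\theta^*_t}-Q^{\pi_t}))(x)$, since averaging $V^{\pi_t}(x)=(M^{\pi_t}Q^{\pi_t})(x)$ under $\pi_t$ reproduces $V^{\pi_t}$; this is again bounded by $\varepsilon_{\pi_t}$. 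The middle term is exactly the normalized return $(1-\gamma)\langle\nu_0,V^{\pi_t}\rangle=\langle\mu^{\pi_t},r\rangle$, so the whole term is at most $\langle\mu^{\pi_t},r\rangle+2\varepsilon_{\pi_t}$.

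For the primal-comparator term I would use $\lambda^*=\B\transpose\mu^*$ to turn every $\langle\lambda^*,\U(\cdot)\rangle$ into $\langle\mu^*,\B\U(\cdot)\rangle$, and the occupancy-flow identity $(1-\gamma)\langle\nu_0,V_t\rangle=\langle\mu^*,(E-\gamma P)V_t\rangle$, valid because $\mu^*$ satisfies $E\transpose\mu^*=(1-\gamma)\nu_0+\gamma P\transpose\mu^*$. After these substitutions the $Q_{\theta_t}$ and $EV_t$ contributions cancel, leaving the exact identity $\LL(\lambda^*,\mu^*;\theta_t,V_t)=\langle\mu^*,\B\U(r+\gamma PV_t)-\gamma PV_t\rangle+\langle\mu^*,\coredelta\theta_t\rangle$, where the last term appears precisely because $\B\U\Phi=\Phi-\coredelta$ under Assumption~\ref{ass:Core_SA_approx}. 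Writing $\B\U(r+\gamma PV_t)=(r+\gamma PV_t)-(I-\B\U)(r+\gamma PV_t)$ reduces the main part to $\langle\mu^*,r\rangle$ minus the core-reconstruction residual $\langle\mu^*,(I-\B\U)(r+\gamma PV_t)\rangle$. To control this residual I would invoke the inherent Bellman error: since $V_t=M^{\pi_t}Q_{\theta_t}$, the definition of $\IBE$ furnishes some $\overline{\theta}_t\in\bb{d}{\thbound}$ with $\xi_t:=r+\gamma PV_t-Q_{\overline{\theta}_t}$ satisfying $\infnorm{\xi_t}\le\IBE$; then $(I-\B\U)(r+\gamma PV_t)=\coredelta\overline{\theta}_t+(I-\B\U)\xi_t$, which in the $\mu^*$-weighted inner product is bounded by $\thbound\langle\mu^*,\coreeps\rangle$ plus a term of order $\IBE$, using $\norm{\coredelta(x,a)}_2=\coreeps(x,a)$, $\twonorm{\overline{\theta}_t}\le\thbound$, and the row-stochasticity of $\B\U$.

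Combining the two estimates, averaging over $t$, and taking expectation over the uniform index $J$ so that $\frac{1}{T}\sum_t\langle\mu^{\pi_t},r\rangle$ becomes $\EET{\langle\mu^{\piout},r\rangle}$ and $\frac{1}{T}\sum_t\varepsilon_{\pi_t}$ becomes $\EET{\varepsilon_{\piout}}$ then yields the claimed inequality. The main obstacle is the primal-comparator analysis: the Lagrangian at $(\lambda^*,\mu^*)$ is not exactly the optimal return because this point is only approximately feasible for the relaxed LP, and the delicate part is showing that every unit of slack is accounted for by exactly the two structural error sources---the core-set misspecification $\coredelta$ entering through $\B\U\Phi=\Phi-\coredelta$, and the inherent Bellman error entering through the linearization of the backup $r+\gamma PV_t$---while carefully tracking which parameter vector ($\theta_t$ versus the \IBE-witness $\overline{\theta}_t$) multiplies each $\coredelta$ factor so that the final constants match the statement.
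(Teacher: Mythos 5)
Your proposal is correct and takes essentially the same route as the paper's proof in Appendix~\ref{app:duality-to-suboptimality}: the identical primal/dual comparator split, the Bellman-equation collapse of the dual term leaving two $\varepsilon_{\pi_t}$ slacks via $u_t=\nu_t\circ\pi_t$, and the same treatment of the primal term in which an IBE witness and the relation $\B\U\Phi=\Phi-\coredelta$ each contribute (your $(I-\B\U)$ split of the backup is equivalent to the paper's insertion of the auxiliary parameter $\theta_t'$). One remark: your derivation yields $2\thbound\iprod{\mu^*}{\coreeps}$, exactly as the paper's own appendix proof does, whereas the lemma statement prints $\thbound\iprod{\mu^*}{\coreeps}$ --- that factor-of-two discrepancy is an inconsistency internal to the paper (Theorem~\ref{thm:main}'s $\epsapprox$ uses the factor $2$), not an error in your argument.
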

Notably, when $\coreeps = \vec{0}$, $\IBE = 0$ and $\varepsilon_{\pi_t} = 0$ hold for all $t$, the claim holds with equality. The proof of this result draws inspiration from \citet{cheng2020reduction}, who first introduced the idea of making use of an adaptively chosen comparator point to reduce duality-gap guarantees to policy suboptimality guarantees in their Proposition~4 (at least to our knowledge). The idea of using a dynamic comparator sequence is new to our analysis and allows us to output a simple softmax policy that is compactly represented by a linearly parametrized Q-function. Below, we prove the lemma for the special case where all approximation errors are zero, and we relegate the slightly more complicated proof for the general case to Appendix~\ref{app:duality-to-suboptimality}.

\paragraph{Proof for zero approximation error.}
Suppose that $\IBE = 0$, $\varepsilon_{\pi_t} = 0$ holds for all $t$ and $\coredelta = \vec{0}$.
Let $Q_t^* = \Phi\theta^*_t = Q^{\pi_t}$ and $\theta'_t$ be such that  $\Phi\theta_t' = r + \gamma PV_t - Q_t$. We start by rewriting each term in the definition of the dynamic duality gap. First, we note that
\begin{align*}
	\LL(\lambda^{*}, u^{*}; \theta_{t}, V_{t}) &= \iprod{\B\transpose\mu^{*}}{\U\pa{r + \gamma PV_{t} - Q_{t}}} +(1 - \gamma)\iprod{\nu_{0}}{V_{t}} + \iprod{\mu^{*}}{Q_{t} - EV_{t}}
	\\
	&= \iprod{\U\transpose\B\transpose\mu^{*}}{r + \gamma PV_{t} - Q_{t}} + \iprod{\mu^{*}}{Q_{t} - \gamma PV_{t}}
	\\
	&= \iprod{\U\transpose\B\transpose\mu^{*}}{\Phi\theta'_t} - \iprod{\mu^{*}}{\Phi\theta'_t - r} = \iprod{\mu^*}{r},
\end{align*}
where in the last line we have used Assumption~\ref{ass:Core_SA_approx} with $\coredelta=0$ that implies $\B\U\Phi = \Phi$.

On the other hand, we have
\begin{align*}
	\LL(\lambda_{t},u_{t}; \theta^{*}_{t}, V^{*}_{t}) 
	&= 
	\iprod{\lambda_{t}}{\U \pa{r + \gamma PV^{\pi_t} - Q^{\pi_t}}} + (1 - \gamma)\iprod{\nu_{0}}{V^{\pi_t}}
	+ \iprod{u_{t}}{Q^{\pi_t} - EV^{\pi_t}}
	\\
	&= (1 - \gamma)\iprod{\nu_{0}}{V^{\pi_t}} =\iprod{\mu^{\pi_t}}{r},
\end{align*}
where the last step follows from the definitions of the value function and the discounted occupancy measure, and the previous step from the fact that the value functions satisfy the Bellman equations $Q^{\pi_{t}} = r + \gamma PV^{\pi_t}$, and that $\iprod{u_t}{EV^{\pi_t}} = \iprod{u_t}{Q^{\pi_t}}$. Indeed, this last part follows from writing
\[
\iprod{u_t}{Q^{\pi_t}} = \sum_x \nu_t(x) \sum_a \pi_t(a|x) Q^{\pi_t}(x,a) = \sum_x \nu_t(x) V^{\pi_t}(x) = \iprod{u_t}{EV^{\pi_t}},
\]
where we made use of the relation between the action-value function and the value function of $\pi_t$.

Putting the above calculations together, we can conclude that
\begin{align*}
	\GG_T\pa{\lambda^*,\mu^*;\theta^*_{1:T},V^*_{1:T}}
	&= \frac{1}{T}\sum_{t=1}^T \iprod{\mu^{*} - \mu^{\pi_{t}}}{r} = \EET{\iprod{\mu^{*} - \mu^{\piout}}{r}},
\end{align*}
thus verifying the claim of the lemma.
\hfill\BlackBox

It remains to show that Algorithm~\ref{alg:Main} indeed guarantees that the dynamic duality gap is bounded. This is done in the following lemma that states a bound in terms of the \emph{conditional relative entropy}\footnote{Technically, this quantity is the conditional relative entropy between the occupancy measures $\mu^{\pi^*}$ and $\mu^{\pi_1}$. We stick with the present notation for clarity. Similar quantities have appeared previously in the context of entropy-regularized reinforcement learning algorithms---see, e.g., \citet{NJG17} and \citet{serrano2021logistic}.} between $\pi^*$ and the initial policy $\pi_1$ defined as $\HH{\pi^*}{\pi_1} = \sum_{x} \nu^{\pi^*}(x) \DD{\pi^*(\cdot|x)}{\pi_1(\cdot|x)}$.
\begin{lemma}\label{lem:duality-bound}
	The dynamic duality gap associated with the iterates produced by Algorithm~\ref{alg:Main} satisfies
	\[
	\EE{\GG_T\pa{\lambda^*,u^*;\theta^*_{1:T},V^*_{1:T}}} \le 
	\frac{\DD{\lambda^*}{\lambda_1}}{\eta T} + \frac{\HH{\pi^*}{\pi_1}}{\beta T} + \frac{2 \thbound^2}{\alpha K} + \frac{\eta m^2(1 + 2R\thbound)^2}{2} + \frac{\beta R^2\thbound^2}{2} + 2 \alpha R^2.
	\]
\end{lemma}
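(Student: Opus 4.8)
The plan is to write the per-round summand of the dynamic duality gap as a sum of online-learning regret terms, one for each of the three genuinely updated low-dimensional objects ($\lambda$, $\theta$, and the policy), and to show that the two high-dimensional blocks ($u$ and $V$) contribute nothing because of the implicit closed-form updates built into Algorithm~\ref{alg:Main}. Concretely, I would first insert the iterate Lagrangian $\LL(\lambda_t,u_t;\theta_t,V_t)$ into each summand, splitting the round-$t$ gap into a \emph{primal} part $\LL(\lambda^*,u^*;\theta_t,V_t) - \LL(\lambda_t,u_t;\theta_t,V_t)$ and a \emph{dual} part $\LL(\lambda_t,u_t;\theta_t,V_t) - \LL(\lambda_t,u_t;\theta^*_t,V^*_t)$. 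Using the bilinearity of $\LL$, I would further decompose the primal part into a $\lambda$-block $\iprod{\lambda^*-\lambda_t}{\U(r+\gamma PV_t - Q_t)}$ and a $u$-block $\iprod{u^*-u_t}{Q_t - EV_t}$, and the dual part into a $\theta$-block and a $V$-block.

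The crucial simplifications come from the algorithm's implicit choices. Since $V_t = M^{\pi_t}Q_t$, a coordinatewise computation gives $\iprod{u_t}{Q_t - EV_t} = \sum_x \nu_t(x)\sum_a \pi_t(a|x)(Q_t(x,a)-V_t(x)) = 0$, which kills the self-term of the $u$-block and leaves only $\iprod{u^*}{Q_t - EV_t}$. Likewise, because $u_t = \nu_t\circ\pi_t$ with $\nu_t = \gamma P\transpose\U\transpose\lambda_t + (1-\gamma)\nu_0$ satisfies the primal flow constraint exactly, we have $E\transpose u_t = \nu_t$, so the coefficient of $V$ in $\LL(\lambda_t,u_t;\theta,V)$ equals $\nu_t - E\transpose u_t = \vec{0}$ and the $V$-block vanishes identically. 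Finally, taking $u^* = \mu^* = \nu^{\pi^*}\circ\pi^*$ as in Lemma~\ref{lem:duality-to-suboptimality}, the surviving $u$-block rearranges exactly into a per-state policy regret, $\iprod{u^*}{Q_t - EV_t} = \sum_x \nu^{\pi^*}(x)\iprod{\pi^*(\cdot|x)-\pi_t(\cdot|x)}{Q_t(x,\cdot)}$, using $V_t(x)=\sum_a\pi_t(a|x)Q_t(x,a)$.

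It then remains to bound the three surviving regrets, each after verifying conditional unbiasedness of the relevant stochastic gradient. For $\lambda$, I would check that $\EEt{\Tilde{g}_\lambda(t)} = \U(r+\gamma PV_t - Q_t)$, so the update $\lambda_{t+1}\propto\lambda_t e^{\eta\Tilde{g}_\lambda(t)}$ is entropic mirror ascent on the $\lambda$-block; the standard mirror-descent bound (KL is $1$-strongly convex w.r.t.\ $\onenorm{\cdot}$, dual norm $\infnorm{\cdot}$) yields $\DD{\lambda^*}{\lambda_1}/(\eta T) + \tfrac{\eta}{2T}\sum_t\infnorm{\Tilde{g}_\lambda(t)}^2$, and since $\Tilde{g}_\lambda(t)$ is a single rescaled coordinate with $\infnorm{\Tilde{g}_\lambda(t)}\le m(1+2R\thbound)$ this gives the term $\tfrac{\eta m^2(1+2R\thbound)^2}{2}$. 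For the policy, the softmax update is precisely Hedge run per state with gains $Q_t(x,\cdot)$, whose deterministic regret bound, averaged against $\nu^{\pi^*}$ and using $\infnorm{Q_t(x,\cdot)}\le R\thbound$, produces $\HH{\pi^*}{\pi_1}/(\beta T) + \tfrac{\beta R^2\thbound^2}{2}$. For $\theta$, I would verify $\EEt{\Tilde{g}_\theta(t,i)} = \Phi\transpose(u_t - \U\transpose\lambda_t)$, the gradient of the linear $\theta$-block, so the inner loop is $K$ steps of projected stochastic gradient descent over $\bb{d}{\thbound}$; the online-gradient-descent bound with diameter $2\thbound$ and $\twonorm{\Tilde{g}_\theta(t,i)}\le 2R$ gives $\tfrac{2\thbound^2}{\alpha K} + 2\alpha R^2$, and averaging the inner iterates together with linearity converts this into a bound on $\iprod{\Phi\transpose(u_t-\U\transpose\lambda_t)}{\theta_t - \theta^*_t}$. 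Summing the three contributions and taking total expectation yields the claim.

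The main obstacle is conceptual rather than computational: recognizing that the algorithm's two implicit updates for $u_t$ and $V_t$ are exactly what make the $V$-block and the $u$ self-term vanish and recast the residual $u$-block as a clean policy-improvement regret absorbed by the softmax update. Once this structure is identified, the remaining work is the careful bookkeeping of conditional unbiasedness (so that the martingale differences between true and stochastic gradients disappear in expectation, with $\theta^*_t$ and $V^*_t$ being $\F_{t-1}$-measurable through $\pi_t$) and the collection of the elementary magnitude bounds $\infnorm{\Tilde{g}_\lambda}$, $\twonorm{\Tilde{g}_\theta}$, and $\infnorm{Q_t}$ feeding the three standard regret lemmas.
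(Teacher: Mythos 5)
Your proposal is correct and follows essentially the same route as the paper's proof: the same primal/dual split of the dynamic duality gap, the same use of the implicit updates (with $u_t = \nu_t\circ\pi_t$ making the $V$-block coefficient $\nu_t - E\transpose u_t$ vanish and $V_t = \Mpi[\pi_t] Q_t$, i.e.\ $V_t(x)=\sum_a \pi_t(a|x)Q_t(x,a)$, killing the $u_t$ self-term), and the same three regret bounds --- entropic mirror ascent for $\lambda$, per-state Hedge against $\nu^{\pi^*}$ for the policy, and projected SGD with diameter $2\thbound$ for $\theta$ --- with matching constants. The only cosmetic difference is that the paper bounds $\infnorm{\tilde{g}_\lambda(t)}$ by $m(1+(1+\gamma)R\thbound)$ before relaxing to $m(1+2R\thbound)$, which you state directly.
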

The proof is based on decomposing the dynamic duality gap into the sum of the average regret of the primal method and the average dynamic regret of the dual method. These regrets are then bounded via a standard analysis of stochastic mirror descent methods, as well as via a specialized analysis that takes advantage of the implicitly defined updates for the variables that would be otherwise intractable for mirror descent.
\begin{proof}
	We start by rewriting the dynamic duality gap as follows:
	\begin{align*}
		&\GG_T\pa{\lambda^*,u^*;\theta^*_{1:T},V^*_{1:T}} 
		=\frac{1}{T}\sum_{t=1}^{T}	\bpa{\LL(\lambda^{*}, u^{*}; \theta_{t}, V_{t}) - \LL(\lambda_{t},u_{t}; \theta^{*}_{t}, V^{*}_{t})}
		\\
		&\qquad 
		= \frac{1}{T}\underbrace{\sum_{t=1}^{T} \bpa{\LL(\lambda^{*}, u^{*}; \theta_{t}, V_{t}) - \LL(\lambda_{t},u_{t};\theta_{t}, V_{t})}}_{\primalregret_T(\lambda^*,u^*)} 
		+ \frac{1}{T}\underbrace{\sum_{t=1}^{T}	\bpa{\LL(\lambda_{t},u_{t};\theta_{t}, V_{t})- \LL(\lambda_{t},u_{t}; \theta^{*}_{t}, V^{*}_{t})}}_{\dualregret_T(\theta^*_{1:T},V_{1:T}^*)},
	\end{align*}
	where $\primalregret_T$ and $\dualregret_T$ are the regret of the primal method and the dynamic regret of the dual method, respectively. We bound the two terms separately below. Before doing so, it will be useful to introduce the following shorthand notations for the gradient of the Lagrangian with respect to $\lambda$:
	\begin{align*}
		g_{\lambda}(t) &= \nabla_\lambda \LL(\lambda_t,u_t;\theta_t,V_t) = \U\pa{r + \gamma PV_{t} - Q_{t}}.
	\end{align*}
	
	First, we rewrite the primal regret as 
	\begin{align*}
		\primalregret_T(\lambda^*,u^*) 
		&= \sum_{t=1}^T \bpa{\iprod{\lambda^{*} -\lambda_{t}}{\U \pa{r + \gamma PV_{t} - Q_{t}}} + \iprod{u^{*} - u_t}{Q_{t} - EV_{t} }}
		\\
		&= \sum_{t=1}^T \bpa{\iprod{\lambda^{*} -\lambda_{t}}{g_\lambda(t)} + \iprod{u^{*}}{Q_{t} - EV_{t}}},
	\end{align*}
	where second step follows from recognizing the expression of $g_\lambda(t)$ and exploiting the properties of our choice of $u_t$ and $V_t$. Indeed, notice that 
	\[
	\iprod{u_t}{Q_t} =  \sum_x \nu_t(x) \sum_a \pi_t(a|x) Q_t(x,a) = \sum_x \nu_t(x) V_t(x) = \iprod{u_t}{EV_t}.
	\]
	
	Controlling the remaining two terms can be achieved by the standard analysis of online stochastic mirror descent \citep{NY83,BT03}. We only provide the high-level arguments here and defer the technical details to Appendix~\ref{app:SMA}. Since the primal updates on $\lambda$ directly use stochastic mirror ascent updates, the standard analysis applies and gives a regret bound of 
	\[
	\EE{\sum_{t=1}^T \iprod{\lambda^{*} -\lambda_{t}}{g_\lambda(t)}} \le \frac{\DD{\lambda^*}{\lambda_1}}{\eta} + \frac{\eta Tm^2(1 + 2R\thbound)^2}{2}.
	\]
	To see how a mirror descent analysis applies to the second term in the above bound, note that it can be rewritten as
	\begin{align}
		\nonumber\sum_{t=1}^T \iprod{u^{*}}{Q_{t} - EV_{t}}
		\nonumber&= \sum_x \nu^*(x)\sum_{t=1}^T \sum_{a} \pa{\pi^*(a|x) - \pi_t(a|x)} Q_t(x,a).
	\end{align}
	For each individual $x$, the sum $\sum_{t=1}^T \sum_{a} \pa{\pi^*(a|x) - \pi_t(a|x)} Q_t(x,a)$ can be seen as the regret in a local online learning game with rewards $Q_t(x,\cdot)$ and decision variables $\pi_t(\cdot|x)$. Thus, noting that the update rule for $\pi_t(\cdot|x)$ precisely matches an instance of mirror descent, the standard analysis applies and gives a regret bound of 
	\begin{align}\label{eq:FTL}
		\nonumber\sum_{t=1}^T \iprod{u^{*}}{Q_{t} - EV_{t}}
		&\le \sum_x \nu^*(x) \pa{\frac{\DD{\pi^*(\cdot|x)}{\pi_1(\cdot|x)}}{\beta}+ \frac{\beta TR^{2}\thbound^{2}}{2}} =  \frac{\HH{\pi^*}{\pi_1}}{\beta} + \frac{\beta TR^{2}\thbound^{2}}{2}.
	\end{align}
	To proceed, we rewrite the dynamic regret of the dual method as
	\begin{align*}
		\dualregret_T(\theta^*_{1:T},V_{1:T}^*)
		&= \sum_{t=1}^T \bpa{\iprod{\theta_{t}-\theta^{*}_{t}}{\Phi\transpose u_{t} - \Phi\transpose\U\transpose\lambda_{t}} + \iprod{V_{t} - V^{*}_{t}}{(1 - \gamma)\nu_{0} + \gamma P\transpose\U\transpose\lambda_{t} - E\transpose u_{t}}}\\
		&= \sum_{t=1}^T \iprod{\theta_{t}-\theta^{*}_{t}}{\Phi\transpose u_{t} - \Phi\transpose\U\transpose\lambda_{t}}.
	\end{align*}
	where the last step follows from using the definition of $u_t$ that ensures $E\transpose u_t = \gamma P\transpose \U\transpose\lambda_t + (1-\gamma)\nu_0$, and thus that the second term in the first line is zero. 
	We now notice that the sum is exactly the function that the dual method optimizes within the inner loop of each update, and thus it can be controlled by analyzing the performance of the averaged SGD iterate $\theta_t$. 
	This analysis gives the following bound, stated and proved as Lemma~\ref{lem:SGDbound} in Appendix~\ref{app:SGDbound}:
	\begin{equation}\label{eq:SGDbound}
		\EEt{\iprod{\theta_{t}-\theta^{*}_{t}}{\Phi\transpose u_{t} - \Phi\transpose\U\transpose\lambda_{t}}} \le \frac{\twonorm{\theta_{t-1} - \theta^*_t}^2}{2\alpha K} + 2 \alpha R^2.
	\end{equation}
	Putting all the bounds together concludes the proof.
\end{proof}

\paragraph{Proof of Theorem~\ref{thm:main}.} The proof of the first claim follows from combining Lemmas~\ref{lem:duality-to-suboptimality} and~\ref{lem:duality-bound}. The second part follows from optimizing the constants in the upper bound, and taking into account that the total number of queries to the generative model is $T(K+1)$.
\hfill\BlackBox

\section{Discussion}\label{sec:conc}

In this section, we conclude by discussing the broader context of our result, the 
limitations of our method, and outline some directions for future work.

\paragraph{The role of the core set assumption.} The infamous negative result of \citet{du2020good} states that 
only supposing approximate $Q^{\pi}$-realizability up to an error of $\varepsilon$, there exist environments for which 
any algorithm that finds an $\varepsilon$-optimal policy would require an exponential number of queries to the 
generative model. On the positive side, \cite{lattimore2020learning} showed that this exponential blow-up in the 
feature dimension or effective horizon is avoidable if one only needs the policy 
to be $\Omega\bpa{\varepsilon\sqrt{d}}$-optimal.  Like \cite{shariff2020efficient}, our main result slightly improves 
on this result by assuming some additional structure on the function approximator: access to a small core set of states 
or state-action pairs that can realize all the other possible feature realizations as convex combinations. 
As shown in our Theorem~\ref{thm:main} (and Theorem~3 of \cite{shariff2020efficient}, this condition indeed allows 
sidestepping the negative results mentioned above, allowing us to achieve $O\pa{\varepsilon}$-suboptimality with 
polynomial query complexity. The most burning question of all is of course how to find good core state-action pairs 
satisfying Assumption~\ref{ass:Core_SA_approx}, or if there exist a small set of core state-action pairs under 
reasonable assumptions on the MDP and the feature map. Similar questions arise regarding previous works of 
\citet{zanette2019limiting,shariff2020efficient,wang2021sample}, but we are not aware of a reassuring answer at the 
moment. We note that our definition allows for approximate notions of core state-action pairs, which leaves open the 
possibility of incrementally growing the core set until sufficiently low error is achieved. We are optimistic that 
future work can realize the potential of learning core sets on the fly, based on direct interaction with the MDP rather 
than assuming access to a simulator, and that our results will become more broadly applicable.

\paragraph{The tightness of our bounds.} The bounds stated in Theorem~\ref{thm:main} are unlikely to be tight in 
general, as a comparison with the simple tabular case reveals. Indeed, the tabular case can be modeled in our setting 
with $m = d = |\X||\A|$, and thus the classic lower bounds of \citet{azar2013minimax} imply a query-complexity lower 
bound of $\Omega\pa{\frac{d}{\pa{1-\gamma}^{2}\varepsilon^{2}}}$ for finding an $\varepsilon$-optimal policy. In 
contrast, our bounds are of the considerably higher order 
$O\pa{\frac{d^{2}\log{\pa{d|A|}}}{\pa{1-\gamma}^{4}\varepsilon^{4}}}$. This excess complexity is largely due to the 
double-loop structure of our algorithm, which ensures that the dynamic regret of the dual player remains under control. 
We conjecture that it may be possible to avoid the double-loop structure of our method via more sophisticated 
algorithms incorporating techniques like two-timescale updates \citep{borkar1997stochastic} or optimistic 
mirror descent steps \citep{Kor76,RS13b}. We leave the investigation of such ideas for future work.

\paragraph{Realizability assumptions.} 
Our current analysis requires realizability assumptions that are only mildly weaker than the (arguably rather strong) linear MDP assumption. Whether or not these assumptions can be significantly relaxed is an open question, but we believe that they may be necessary if one aims to output a single compactly parametrized policy as we do. Our algorithm and analysis certainly require this assumption, at least due to the intuition that our method is more ``policy-iteration-like'' than ``value-iteration-like'' due to the extensive policy evaluation steps that it features. Still, the connection with approximate policy iteration methods is weaker than in the case of other algorithms like POLITEX \citep{LABWBS19} or PC-PG \citep{agarwal2020pc}, which leaves some hope for relaxing our realizability assumptions.

\paragraph{Potential extensions.}
The method we develop here is potentially applicable for a much broader range of settings than planning with a 
generative model. In particular, we are confident that our techniques can be applicable for optimistic exploration in 
infinite-horizon MDPs via a combination of tools developed by \citet{NPB20} and \citet{wei2021learning}, to off-policy 
learning \citep{UHJ20,zhan2022offline}, or to imitation learning \citep{kamoutsi2021efficient,viano2022proximal}. We

\acks{This project has received funding from the European Research Council (ERC) under the European Union’s Horizon 2020 research and innovation programme (Grant agreement No.~950180).}

\bibliography{alt2023-sample}

\newpage
\appendix
\section{Missing proofs for Section~\ref{sec:RALP}}
\subsection{Analysis of the Relaxed LP for Linear MDPs}\label{app:linMDP-realizability}
Here we show that the optimal solutions of the original linear program are also optimal in the relaxed LP when there exist $W\in\Rn^{d\times X}$, $\vartheta\in\Rn^{d}$ such that $P = \Phi W$, $r = \Phi \vartheta$ and $\coredelta = \vec{0}$. For ease of reference, we denote by $\M$ and $\Tilde{\M}_{\Phi}$ the feasible sets of the original LP and the relaxed LP respectively, and we let $\wt{\M}_{\Phi}\bigr|_u$ denote the set of distributions $u$ such that $(\lambda,u)\in\wt{\M}_{\Phi}$ holds for some $\lambda$. Recall that $\M$ is the set of valid occupancy measures.

For the primal relaxed LP, using $P = \Phi W$ and $\Phi=\B\U\Phi$ it is easy to check that for all $(\lambda,u)\in \Tilde{\M}_{\Phi}$, $u$ is a valid occupancy measure satisfying $E\transpose u=  (1 - \gamma)\nu_{0} + \gamma P\transpose u$. Furthermore, for $u\in\M$, choose $\lambda=\B\transpose u$ so that $\Phi\transpose \U\transpose\lambda= \Phi\transpose u$ holds, which implies feasibility as $(\lambda,u)\in\Tilde{\M}_{\Phi}$. Therefore, we have $\M\subseteq \wt{\M}_{\Phi}\bigr|_u$. Let $(\lambda^{*}, u^{*}) = \argmax\{\langle\lambda\,,\U r\rangle| E\transpose u=  (1 - \gamma)\nu_{0} + \gamma P\transpose \U\transpose\lambda, \Phi\transpose \U\transpose\lambda= \Phi\transpose u, \lambda\in\real_+^{m}, u\in\real^{XA}_+\}$. We already know by the relation $\M\subseteq \wt{\M}_{\Phi}\bigr|_u$ that $\iprod{\B\transpose\mu^{*}}{\U r}\leq \iprod{\lambda^{*}}{\U r}$ but since $r=\Phi\vartheta$ and $\Phi=\B\U\Phi$, we also have
$$\iprod{\lambda^{*}}{\U r}
= \iprod{\lambda^{*}}{\U\Phi\varphi}
= \iprod{\Phi\transpose\U\transpose\lambda^{*}}{\vartheta}
= \iprod{\Phi\transpose u^{*}}{\vartheta}
= \iprod{u^{*}}{\Phi\vartheta}
= \iprod{u^{*}}{r}.$$
Also
$$\iprod{\B\transpose\mu^{*}}{\U r}
= \iprod{\mu^{*}}{\B\U r}
= \iprod{\mu^{*}}{\B\U\Phi\vartheta}
= \iprod{\mu^{*}}{\Phi\vartheta}
= \iprod{\mu^{*}}{r}$$
That is, $\iprod{\mu^{*}}{r}\leq \iprod{u^{*}}{r}$. But since $u^{*}$ is a valid occupancy measure, we must have $\iprod{\mu^{*}}{r} = \iprod{u^{*}}{r}$ due to optimality of $\mu^*$ on the space of occupancy measures. Therefore, any optimal occupancy measure is feasible and optimal in the primal relaxed ALP when the MDP is linear and the core state action assumption is satisfied exactly.

For the dual, notice that linearity of the transition probability and reward functions imply that there exists $\theta^{*}\in\real^{d}$ such that $Q^{*} = \Phi\theta^{*}$. Hence, feasibility of $(V^{*},\theta^{*})$ follows by definition so we only need to check optimality in the relaxed dual LP. Let $T^{*}_{V}:\real^{X}\rightarrow\real^{X}$ denote the Bellman optimality operator for state value functions. We know that for any feasible $(V,\theta)$, we have $\U Q_{\theta}\geq \U[r + \gamma PV]$, which by monotonicity of $\B$ implies $\B\U Q_{\theta}\geq \B\U[r + \gamma PV]$. Recalling that both $Q_\theta$ and $r + \gamma PV$ are linear\footnote{This is the only point where the proof uses the assumption that the transition model is linear. Also, when the reward vector is known one can easily prove linearity of rewards in the feature map. Notice that this implies that the result holds under the much weaker condition of $\Phi$ being complete under the Bellman optimality operator.\label{fn:realizability}} in $\Phi$ and $\B\U\Phi = \Phi$, this further implies $Q_{\theta}\geq r + \gamma PV$.
Then by the first constraint, we have $EV\geq Q_{\theta}\geq r + \gamma PV$, which implies $V\geq M[r + \gamma PV]= T^{*}_{V}V$. Then, by monotonicity of the Bellman optimality operator with $V^{*}$ as its unique fixed point, for any feasible $(V,\theta)$ of the relaxed dual LP, we can repeat the above argument and obtain 
\[
 V\geq T_V^*V \geq \pa{T_V^*}^2 V \geq \dots \geq V^{*}.
\]
In a similar way using the Bellman optimality operator acting on $Q$-functions, we can also show that $Q_{\theta}\geq Q_{\theta^{*}}$. Therefore, $(V^{*},\theta^{*})$ is optimal in the relaxed LP, and these solutions are unique when $\nu_{0}$ has full support.

\section{Missing proofs for Section~\ref{sec:analysis}}
\subsection{Properties of the gradient estimators}\label{app:estimates}
Here we state a pair of results that establish some useful properties of the gradient estimators calculated by Algorithm~\ref{alg:Main}. The first one concerns the gradient estimators used by the primal method.
\begin{lemma}\label{lem:glambda}
 The gradient estimator $\Tilde{g}_{\lambda}(t)$ satisfies $\EEc{\Tilde{g}_{\lambda}(t)}{\F_{t-1},\theta_{t}} = \nabla_{\lambda}\LL(\lambda_{t},u_{t}; \theta_{t}, V_{t})$ and $\infnorm{\Tilde{g}_{\lambda}(t)} \le m(1 + (1+\gamma)R\thbound)$.
\end{lemma}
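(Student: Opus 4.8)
The plan is to treat the two assertions separately, since both follow from elementary manipulations once the sources of randomness in the construction of $\Tilde{g}_{\lambda}(t)$ are identified. Conditioned on $\F_{t-1}$ and on $\theta_{t}$ (hence on $Q_{t} = \Phi\theta_{t}$ and on $V_{t}$, both of which are deterministic functions of $\theta_{t}$ and $\pi_{t}$), the only randomness left in the estimator comes from the draw $(x_{t},a_{t})\sim\text{Unif}(\Tilde{\Z})$ and the subsequent transition $y_{t}\sim P(\cdot|x_{t},a_{t})$. Recall from the definition of the Lagrangian that $\nabla_{\lambda}\LL(\lambda_{t},u_{t};\theta_{t},V_{t}) = \U(r + \gamma PV_{t} - Q_{t}) = g_{\lambda}(t)$, so the target of the unbiasedness claim is this fixed $m$-dimensional vector.

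For unbiasedness, I would take the conditional expectation in two stages. First, averaging over $y_{t}$ with $(x_{t},a_{t})$ held fixed replaces $V_{t}(y_{t})$ by $\sum_{x'}P(x'|x_{t},a_{t})V_{t}(x') = (PV_{t})(x_{t},a_{t})$, so that the conditional expectation of the bracketed scalar becomes exactly $[r + \gamma PV_{t} - Q_{t}](x_{t},a_{t})$. Second, averaging over the uniform draw of $(x_{t},a_{t})$ from the $m$ core pairs, the prefactor $m$ cancels the sampling probability $1/m$, and summing the coordinate vectors $\vec{e}_{(x_{t},a_{t})}$ reassembles the $m$-dimensional vector whose entry at each core pair $(x,a)$ equals $[r + \gamma PV_{t} - Q_{t}](x,a)$. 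This is precisely $\U(r + \gamma PV_{t} - Q_{t}) = \nabla_{\lambda}\LL(\lambda_{t},u_{t};\theta_{t},V_{t})$, which establishes the first claim.

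For the bound, the key observation is that $\Tilde{g}_{\lambda}(t)$ is supported on a single coordinate, so its $\infty$-norm equals $m$ times the absolute value of the scalar $r(x_{t},a_{t}) + \gamma V_{t}(y_{t}) - Q_{t}(x_{t},a_{t})$. I would then bound each term: $|r(x_{t},a_{t})|\le 1$ since rewards lie in $[0,1]$; $|Q_{t}(x,a)| = |\iprod{\varphi(x,a)}{\theta_{t}}| \le \twonorm{\varphi(x,a)}\,\twonorm{\theta_{t}}\le R\thbound$ by Cauchy--Schwarz together with the feature-norm bound $\twonorm{\varphi(x,a)}\le R$ and $\theta_{t}\in\bb{d}{\thbound}$; and $|V_{t}(y_{t})| \le \max_{a} |Q_{t}(y_{t},a)| \le R\thbound$ since $V_{t}(y_{t})$ is a convex combination under $\pi_{t}(\cdot|y_{t})$ of the values $Q_{t}(y_{t},\cdot)$. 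The triangle inequality then yields $|r(x_{t},a_{t})+\gamma V_{t}(y_{t}) - Q_{t}(x_{t},a_{t})| \le 1 + (1+\gamma)R\thbound$, and multiplying by $m$ gives the stated bound $\infnorm{\Tilde{g}_{\lambda}(t)} \le m(1 + (1+\gamma)R\thbound)$.

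I do not anticipate any real obstacle here; the only points requiring care are (i) recording that conditioning on $\theta_{t}$ freezes $Q_{t}$ and $V_{t}$ so that the estimator depends only on the fresh samples $(x_{t},a_{t},y_{t})$, and (ii) correctly matching the reassembled conditional expectation with the action of the selection matrix $\U$, so that the uniform-sampling normalization $1/m$ and the explicit factor $m$ cancel exactly as intended.
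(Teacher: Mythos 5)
Your proposal is correct and follows essentially the same route as the paper's own proof: the same two-stage conditional expectation (transition average giving $PV_t$, then the uniform draw over core pairs cancelling the factor $m$ against $1/m$ to reassemble $\U(r+\gamma PV_t - Q_t)$), and the same term-by-term bound $1+(1+\gamma)R\thbound$ on the bracketed scalar via $r\in[0,1]$ and $\infnorm{V_t}\le\infnorm{Q_t}\le R\thbound$. If anything, your version is slightly more careful than the paper's in keeping absolute values around the scalar before bounding it.
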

\begin{proof}
The first claim follows from the following straightforward calculation:
	\begin{align*}
		\EEc{\Tilde{g}_{\lambda}(t)}{\F_{t-1},\theta_{t}}
		&= \EEc{m[r(x_{t},a_{t})  + \gamma V_{t}(y_{t}) - Q_{t}(x_{t},a_{t})]\vec{e}_{(x_{t},a_{t})}}{\F_{t-1},\theta_{t}}\\
		&= m\sum_{(x,a)\in\Tilde{\Z}}\dfrac{1}{m}\sum_{y\in\X}P(y|x,a)[r(x,a)  + \gamma V_{t}(y) - Q_{t}(x,a)]\vec{e}_{(x,a)}\\
		&= \sum_{(x,a)\in\Tilde{\Z}}\big[r(x,a)  + \gamma\sum_{y\in\X}P(y|x,a) V_{t}(y) - Q_{t}(x,a)\big]\vec{e}_{(x,a)}\\
		&= \U[r + \gamma PV_{t} - Q_{t}] = \nabla_{\lambda}\LL(\lambda_{t},u_{t}; \theta_{t}, V_{t}).
	\end{align*}
	For the second half, recall that $r\in[0,1]$ and $\infnorm{V_{t}} \le \infnorm{Q_{t}}\leq R\thbound$, so that we can write
	\begin{align*}
		\left\|\Tilde{g}_{\lambda}(t)\right\|_{\infty}
		&= \left\|m\pa{r(x_{t},a_{t})  + \gamma V_{t}(y_{t}) - Q_{t}(x_{t},a_{t})}\vec{e}_{(x_{t},a_{t})}\right\|_{\infty}\\
		&= m\left\|\pa{r(x_{t},a_{t})  + \gamma V_{t}(y_{t}) - Q_{t}(x_{t},a_{t})}\vec{e}_{(x_{t},a_{t})}\right\|_{\infty}\\
		&= m\pa{r(x_{t},a_{t})  + \gamma V_{t}(y_{t}) - Q_{t}(x_{t},a_{t})} \leq m(1+(1+\gamma)R\thbound).
	\end{align*}
	This completes the proof.
\end{proof}
The second result concerns the gradient estimators used by the dual method for updating $\theta$.
\begin{lemma}\label{lem:gtheta}
 The gradient estimator $\Tilde{g}_{\theta}(t,i)$ satisfies\footnote{The notation $\EEti{\cdot}$ is as defined in the proof of Lemma~\ref{lem:duality-bound}.} $\EEti{\Tilde{g}_{\theta}(t,i)} = \nabla_{\theta}\LL(\lambda_{t},u_{t}; \theta_{t-1}, V_{t-1})$ and $\twonorm{\Tilde{g}_{\theta}(t,i)} \le 2R$.
\end{lemma}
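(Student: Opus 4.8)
The plan is to verify the two claims separately, treating the unbiasedness statement first and the norm bound second. The key preliminary observation is that the Lagrangian depends on $\theta$ only through the term $\iprod{u - \U\transpose\lambda}{Q_\theta} = \iprod{u - \U\transpose\lambda}{\Phi\theta}$, so its gradient with respect to $\theta$ is the \emph{constant} vector $\nabla_\theta\LL(\lambda_t, u_t; \theta, V) = \Phi\transpose u_t - \Phi\transpose\U\transpose\lambda_t$, independent of both $\theta$ and $V$. In particular this equals the gradient evaluated at $(\theta_{t-1}, V_{t-1})$, so it suffices to show $\EEti{\Tilde{g}_\theta(t,i)} = \Phi\transpose u_t - \Phi\transpose\U\transpose\lambda_t$. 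Throughout I would use that conditioning on $\F_{t-1}$ renders $\lambda_t$, $\pi_t$, and hence $\nu_t$ and $u_t$, deterministic, while the sampling variables drawn inside the inner loop are fresh and independent.

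Next I would compute the conditional expectation of each of the three summands of $\Tilde{g}_\theta(t,i)$. The first summand is easy: since $(x_{0,t}^{(i)}, a_{0,t}^{(i)}) \sim \nu_0 \circ \pi_t$, its expectation is $(1-\gamma)\Phi\transpose(\nu_0 \circ \pi_t)$. The third summand is also immediate: since $(x_t^{(i)}, a_t^{(i)}) \sim \lambda_t$ is supported on the core set, its expectation is $\sum_{(x,a)\in\Tilde{\Z}}\lambda_t(x,a)\varphi(x,a) = \Phi\transpose\U\transpose\lambda_t$, using that $\U\transpose\lambda_t$ is the lift of $\lambda_t$ to $\real^{XA}$ supported on $\Tilde{\Z}$.

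The main work---and the step I expect to be the main obstacle---is the middle summand, whose expectation chains three sampling stages: $(x_t^{(i)}, a_t^{(i)}) \sim \lambda_t$, then $\ol{x}_t^{(i)} \sim P(\cdot|x_t^{(i)}, a_t^{(i)})$, then $\ol{a}_t^{(i)} \sim \pi_t(\cdot|\ol{x}_t^{(i)})$. Taking the nested expectation yields $\gamma\sum_{(x,a)\in\Tilde{\Z}}\lambda_t(x,a)\sum_{x'}P(x'|x,a)\sum_{a'}\pi_t(a'|x')\varphi(x',a')$. To match this against $\Phi\transpose u_t$, I would substitute the algorithm's defining relations $u_t = \nu_t \circ \pi_t$ and $\nu_t = \gamma P\transpose\U\transpose\lambda_t + (1-\gamma)\nu_0$, split $\Phi\transpose u_t$ into a $(1-\gamma)\nu_0$-part and a $\gamma P\transpose\U\transpose\lambda_t$-part, and recognize the identity $(P\transpose\U\transpose\lambda_t)(x') = \sum_{(x,a)\in\Tilde{\Z}}P(x'|x,a)\lambda_t(x,a)$. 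Collecting terms then shows that the $\nu_0$-part of $\Phi\transpose u_t$ cancels the first summand's expectation while the transition part reproduces exactly the middle summand's expectation, giving $\EEti{\Tilde{g}_\theta(t,i)} = \Phi\transpose u_t - \Phi\transpose\U\transpose\lambda_t$. The only care required here is the bookkeeping with the adjoint operators $P\transpose$ and $\U\transpose$ and the order of summation.

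For the norm bound I would apply the triangle inequality directly to the three-term definition of $\Tilde{g}_\theta(t,i)$ and invoke the uniform feature bound $\twonorm{\varphi(x,a)} \le R$, which gives $\twonorm{\Tilde{g}_\theta(t,i)} \le (1-\gamma)R + \gamma R + R = 2R$, since the coefficients $(1-\gamma)$ and $\gamma$ sum to one. This bound is pointwise, holding for every realization of the samples rather than merely in expectation, which is exactly what the mirror-descent analysis invoking it will need.
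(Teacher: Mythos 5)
Your proposal is correct and follows essentially the same route as the paper's proof: computing the conditional expectation term by term through the three sampling stages, matching it against $\Phi\transpose u_t - \Phi\transpose\U\transpose\lambda_t$ via $u_t = \nu_t\circ\pi_t$ and $\nu_t = \gamma P\transpose\U\transpose\lambda_t + (1-\gamma)\nu_0$, and bounding the norm by the triangle inequality using $(1-\gamma)+\gamma+1 = 2$. Your explicit preliminary remark that the Lagrangian is linear in $\theta$, so its gradient is constant in $(\theta,V)$ and the evaluation point $(\theta_{t-1},V_{t-1})$ is immaterial, is a small clarification the paper leaves implicit, but it does not change the argument.
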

\begin{proof}
We first write
	\begin{align*}
		\EEti{\Tilde{g}_{\theta}(t,i)}
		&= \EEti{(1-\gamma)\varphi(x_{0,t}^{(i)},a_{0,t}^{(i)}) + \gamma\varphi(\ol{x}_{t}^{(i)},\ol{a}_{t}^{(i)})- \varphi(x_{t}^{(i)},a_{t}^{(i)})}\\
		&= \Phi\transpose\EEc{(1-\gamma)\vec{e}_{(x_{0,t}^{(i)},a_{0,t}^{(i)})} + \gamma\vec{e}_{(\ol{x}_{t}^{(i)},\ol{a}_{t}^{(i)})}- \vec{e}_{(x_{t}^{(i)},a_{t}^{(i)})}}{\F_{t,i}}\\
		&= (1-\gamma)\Phi\transpose\sum_{(x,a)\in\Z}\nu_{0}(x)\pi_{t}(a|x)\vec{e}_{(x,a)}\\
		&\quad
		+ \gamma\Phi\transpose\sum_{(x,a)\in\Z}\pi_{t}(a|x)\pa{\sum_{(x',a')\in\Tilde{\Z}}P(x|x',a')\lambda_{t}(x',a')}\vec{e}_{(x,a)}\\
		&\quad
		- \Phi\transpose\sum_{(x,a)\in\Tilde{\Z}}\lambda_{t}(x,a)\vec{e}_{(x,a)}\\
		&=\Phi\transpose\bpa{\pa{(1-\gamma)\nu_{0} + \gamma P\transpose\U\transpose\lambda_{t}}\circ \pi_{t} -  \U\transpose\lambda_{t}}\\
		&= \Phi\transpose u_{t} - \Phi\transpose\U\transpose\lambda_{t}\\
		&= \nabla_{\theta}\LL(\lambda_{t},u_{t}; \theta_{t-1}, V_{t-1})
	\end{align*}
	Also, using that $\|\varphi(x,a)\|_{2} \leq R$ for all $(x,a)\in\Z$, we get
	\begin{align*}
		\|\Tilde{g}_{\theta}(t,i)\|_{2}
		&= \|(1-\gamma)\varphi(x_{0,t}^{(i)},a_{0,t}^{(i)}) + \gamma\varphi(\ol{x}_{t}^{(i)},\ol{a}_{t}^{(i)})- \varphi(x_{t}^{(i)},a_{t}^{(i)})\|_{2}\\
		&\leq (1-\gamma)\|\varphi(x_{0,t}^{(i)},a_{0,t}^{(i)})\|_{2} + \gamma\|\varphi(\ol{x}_{t}^{(i)},\ol{a}_{t}^{(i)})\|_{2} + \|\varphi(x_{t}^{(i)},a_{t}^{(i)})\|_{2}\\
		&\leq (1-\gamma)R + \gamma R + R = 2R,
	\end{align*}
	thus concluding the proof.
\end{proof}

\subsection{Stochastic mirror ascent analysis}\label{app:SMA}
	This section presents the final steps in the primal regret analysis deferred from the proof of Lemma~\ref{lem:duality-bound}. In particular, we will show an upper bound on 
	\[
	 \primalregret_T(\lambda^*,u^*) = \sum_{t=1}^T \bpa{\iprod{\lambda^{*} -\lambda_{t}}{g_\lambda(t)} + \iprod{u^{*}}{Q_{t} - EV_{t}}}.
	\]
	The first sum above is clearly the regret of online stochastic mirror ascent, and the second term can be written as an average of local regrets in each state $x$ as 
	\begin{align}\label{eq:local_decomposition}
		\nonumber\sum_{t=1}^T \iprod{u^{*}}{Q_{t} - EV_{t}}
		\nonumber&= \sum_x \nu^*(x)\sum_{t=1}^T \sum_{a} \pa{\pi^*(a|x) - \pi_t(a|x)} Q_t(x,a).
	\end{align}
	The following lemma will allow us to treat the two sums in a unified manner.
	\begin{lemma}\label{lem:OMD}
		Given $\omega_{1}\in\Delta_{N}$, define the sequence of vectors $\omega_2,\dots,\omega_{n+1}$ and $g_1,g_2,\dots,g_n$ in $\real^N$ via the following recursion. For each $k$, let $\wt{g}_k$ satisfy $\EEcc{\wt{g}_k}{\F_{k-1}} = g_k$ and let
		\begin{equation*}
			\omega_{k+1,i} = \frac{\omega_{k,i} e^{\tau \wt{g}_{k,i}}}{\sum_{j=1}^{N}\omega_{k,j}e^{\tau\Tilde{g}_{k,j}}}\qquad\qquad\text{for }i=1,\cdots,N.
		\end{equation*}
		Supposing that $\infnorm{\wt{g}_k}^{2} \le G^{2}$ holds, the following bound is satisfied for any $\omega^{*}\in\Delta_{N}$,
		\[
		\EE{\sum_{k=1}^n \iprod{\omega^* - \omega_k}{g_k}} \le \frac{\DD{\omega^*}{\omega_1}}{\tau} + \frac{\tau n G^2}{2}.
		\]
	\end{lemma}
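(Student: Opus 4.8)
The plan is to recognize the recursion as the exponentiated-gradient (entropic mirror ascent) update and to prove the bound through the standard telescoping-potential argument, using $\DD{\omega^*}{\cdot}$ as the potential function and passing to expectations only at the very end.

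First I would exploit the closed form of the update. Writing $Z_k = \sum_{j} \omega_{k,j} e^{\tau \wt{g}_{k,j}}$ for the normalizer, the definition $\omega_{k+1,i} = \omega_{k,i} e^{\tau\wt{g}_{k,i}}/Z_k$ lets me compute the single-step change in potential exactly:
\[
\DD{\omega^*}{\omega_k} - \DD{\omega^*}{\omega_{k+1}} = \sum_i \omega_i^* \log\frac{\omega_{k+1,i}}{\omega_{k,i}} = \tau\iprod{\omega^*}{\wt{g}_k} - \log Z_k,
\]
where the last step uses $\sum_i \omega_i^* = 1$. The appeal of the entropic setup is that this identity is exact, so that the only quantity left to control is the log-normalizer $\log Z_k$.

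The step I expect to be the crux is bounding $\log Z_k$. Since $\omega_k\in\Delta_N$, the quantity $Z_k = \sum_j \omega_{k,j} e^{\tau \wt{g}_{k,j}}$ equals $\mathbb{E}_{J\sim\omega_k}\bigl[e^{\tau \wt{g}_{k,J}}\bigr]$, the moment generating function of the bounded random variable $\wt{g}_{k,J}$. Because $\infnorm{\wt{g}_k}\le G$, this variable lies in an interval of width $2G$, so Hoeffding's lemma gives $\log Z_k \le \tau\iprod{\omega_k}{\wt{g}_k} + \tau^2(2G)^2/8 = \tau\iprod{\omega_k}{\wt{g}_k} + \tau^2 G^2/2$. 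Getting the constant right here is the main obstacle: one must apply Hoeffding to the full range $2G$ rather than to $G$, and this is precisely what produces the $\tau n G^2/2$ second-order term. Combining with the previous display yields the per-round inequality
\[
\iprod{\omega^* - \omega_k}{\wt{g}_k} \le \frac{\DD{\omega^*}{\omega_k} - \DD{\omega^*}{\omega_{k+1}}}{\tau} + \frac{\tau G^2}{2}.
\]

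Summing over $k=1,\dots,n$ telescopes the potential differences, and dropping the nonnegative terminal term $\DD{\omega^*}{\omega_{n+1}}\ge 0$ leaves $\sum_{k=1}^n \iprod{\omega^* - \omega_k}{\wt{g}_k} \le \DD{\omega^*}{\omega_1}/\tau + \tau n G^2/2$. Finally I would pass to expectations: each $\omega_k$ is $\F_{k-1}$-measurable and $\omega^*$ is fixed, so the tower rule together with $\EEcc{\wt{g}_k}{\F_{k-1}} = g_k$ gives $\EEcc{\iprod{\omega^* - \omega_k}{\wt{g}_k}}{\F_{k-1}} = \iprod{\omega^* - \omega_k}{g_k}$. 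Hence the expected true-gradient regret equals the expected estimated-gradient regret and inherits the same bound, establishing the claim.
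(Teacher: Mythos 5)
Your proof is correct and follows essentially the same route as the paper's: the exact relative-entropy potential computation for the exponentiated-gradient update, Hoeffding's lemma applied to the log-normalizer over the range $[-G,G]$ (yielding the $\tau^2 G^2/2$ term with the right constant), telescoping with the terminal divergence dropped, and the tower rule to replace $\wt{g}_k$ by $g_k$. The only cosmetic difference is that you telescope the pathwise inequality before taking expectations, whereas the paper takes conditional expectations step by step first; the two are mathematically identical.
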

	We provide the standard proof below. To put this result to work, note that the recursion defining $\lambda$ clearly matches the above requirements using the unbiased gradient estimators $\wt{g}_\lambda(t)$ that satisfy $\infnorm{\wt{g}_\lambda(t)}\le m(1 + 2 R\thbound)$ (as shown Lemma~\ref{lem:glambda} in Appendix~\ref{app:estimates}). Thus, we can apply Lemma~\ref{lem:OMD} to obtain the bound
	\[
	\EE{\sum_{t=1}^T \iprod{\lambda^{*} -\lambda_{t}}{g_\lambda(t)}} \le \frac{\DD{\lambda^*}{\lambda_1}}{\eta} + \frac{\eta Tm^2(1 + 2R\thbound)^2}{2}.
	\]
	In order to use the result again to bound the other term, notice that the sequence of policies $\pi_t(\cdot|x)$ and action-value functions $Q_t(x,\cdot)$ are indeed generated by a recursion of the form required by Lemma~\ref{lem:OMD}. Further noting that $\infnorm{Q_t} \le R\thbound$, we can apply the lemma to each $x$ individually to obtain
	\begin{align}
		\nonumber\sum_{t=1}^T \iprod{u^{*}}{Q_{t} - EV_{t}}
		&\le \sum_x \nu^*(x) \pa{\frac{\DD{\pi^*(\cdot|x)}{\pi_1(\cdot|x)}}{\beta}+ \frac{\beta TR^{2}\thbound^{2}}{2}} =  \frac{\HH{\pi^*}{\pi_1}}{\beta} + \frac{\beta TR^{2}\thbound^{2}}{2}.
	\end{align}
	This proves the claims in the proof of Lemma~\ref{lem:duality-bound}.
	
\paragraph{Proof of Lemma \ref{lem:OMD}}
Using that $\omega^{*}$, $\{\omega_{k}\}_{k=1}^{n} \in\Delta_{N}$, the proof follows from straightforward calculations involving the relative entropies between $\omega^*$ and iterates $\omega_k$ and $\omega_{k+1}$:
\begin{align*}
	\DD{\omega^{*}}{\omega_{k+1}}
	&= \sum_{i=1}^{N}\omega^{*}_{i}\log \frac{\omega^{*}_{i}}{\omega_{k+1,i}}\\
	&= \sum_{i=1}^{N}\omega^{*}_{i}\log \frac{\omega^{*}_{i}}{\omega_{k,i}} - \sum_{i=1}^{N}\omega^{*}_{i}\log \frac{\omega_{k+1,i}}{\omega_{k,i}}\\
	&= \DD{\omega^{*}}{\omega_{k}}
	- \sum_{i=1}^{N}\omega^{*}_{i}\log \frac{\omega_{k,i}e^{\tau\Tilde{g}_{k,i}}}{\omega_{k,i}\sum_{j=1}^{N}\omega_{k,j}e^{\tau\Tilde{g}_{k,j}}}\\
	&= \DD{\omega^{*}}{\omega_{k}}
	- \sum_{i=1}^{N}\omega^{*}_{i}\log \frac{e^{\tau\Tilde{g}_{k,i}}}{\sum_{j=1}^{N}\omega_{k,j}e^{\tau\Tilde{g}_{k,j}}}\\
	&= \DD{\omega^{*}}{\omega_{k}}
	- \iprod{\omega^{*}}{\tau\wt{g}_{k}}
	+ \log\sum_{i=1}^{N}\omega_{k,i}e^{\tau\Tilde{g}_{k,i}}\\
	&= \DD{\omega^{*}}{\omega_{k}} - \iprod{\omega^{*} - \omega_{k}}{\tau\wt{g}_{k}}
	- \iprod{\omega_{k}}{\tau\wt{g}_{k}}
	+ \log\sum_{i=1}^{N}\omega_{k,i}e^{\tau\Tilde{g}_{k,i}}\\
	&\leq \DD{\omega^{*}}{\omega_{k}} - \iprod{\omega^{*} - \omega_{k}}{\tau\wt{g}_{k}}
	+ \frac{\tau^2}{2} \infnorm{\wt{g}_{k}}^2,
\end{align*}
where the last inequality follows from the condition on $\tau$ and Hoeffding's lemma (cf.~Lemma~A.1 in \citealp{CBLu06:book}). Taking conditional expectations on both sides, using that $\EEcc{\wt{g}_k}{\F_{k-1}} = g_k$, and reordering the terms, we get
\[
\iprod{\omega^{*} - \omega_{k}}{g_k} \le 
\frac{\DD{\omega^{*}}{\omega_{k}} - \EEs{\DD{\omega^{*}}{\omega_{k+1}}}{k}}{\tau}
+ \frac{\tau}{2}\EEs{\infnorm{\wt{g}_{k}}^2}{k}.
\]
After summing up for all $k$ and taking marginal expectations on both sides, we obtain
\[
\EE{\sum_{k=1}^n \iprod{\omega^{*} - \omega_{k}}{g_{k}}} \le 
\frac{\DD{\omega^{*}}{\omega_{1}} - \EE{\DD{\omega^{*}}{\omega_{T+1}}}}{\tau}
+ \frac{\tau}{2}\sum_{k=1}^n \EE{\infnorm{\wt{g}_{k}}^2}.
\]
Upper bounding the negative divergence term by $0$ and using that $\infnorm{\wt{g}_k}^{2} \le G^{2}$ holds for $k=1,\cdots,n$ concludes the proof.
\hfill$\blacksquare$

\subsection{Stochastic gradient descent analysis}\label{app:SGDbound}
In this section we bound the optimization error incurred in each dual update to the parameter vector $\theta$.  Combined with the facts that $\norm{\theta_{t-1} - \theta^*_t}\le 2\thbound$ and using the gradient bounds stated in Lemma~\ref{lem:gtheta}, the following lemma proves the bound of Equation~\eqref{eq:SGDbound} appearing in the main text.
\begin{lemma}\label{lem:SGDbound}
 For any $\theta^*_t\in\bb{b}{d}$, the iterate $\theta_t$ satisfies the bound
	\[
	\iprod{\theta_{t}-\theta^{*}_{t}}{\Phi\transpose u_{t} - \Phi\transpose\U\transpose\lambda_{t}} \le \frac{\twonorm{\theta_{t-1} - \theta^*_t}^2}{2\alpha K} + \frac{\alpha}{2K} \sum_{i=1}^K \EE{\twonorm{\wt{g}_\theta(t,i)}^2}.
	\]
\end{lemma}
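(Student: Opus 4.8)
The plan is to recognize that the left-hand side is precisely the suboptimality gap, for the averaged iterate $\theta_t$, of a \emph{linear} objective, and to control it by the textbook analysis of projected stochastic gradient descent. The crucial structural observation is that the Lagrangian is affine in $\theta$: since $Q_\theta = \Phi\theta$, we have $\nabla_\theta\LL(\lambda_t,u_t;\theta,V) = \Phi\transpose u_t - \Phi\transpose\U\transpose\lambda_t$, a vector that depends neither on $\theta$ nor on $V$. Writing $g = \Phi\transpose u_t - \Phi\transpose\U\transpose\lambda_t$ for this constant gradient (which is $\F_{t-1}$-measurable, as $\lambda_t$ and $u_t$ are fixed at the start of iteration $t$), Lemma~\ref{lem:gtheta} guarantees that the inner-loop estimators are unbiased for it, i.e.\ $\EEti{\wt{g}_\theta(t,i)} = g$ for every $i$.

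First I would write down the standard one-step inequality for the projected update $\theta_t^{(i+1)} = \Pi_{\bb{d}{\thbound}}(\theta_t^{(i)} - \alpha\wt{g}_\theta(t,i))$. Using that $\theta^*_t\in\bb{d}{\thbound}$ together with the nonexpansiveness of the Euclidean projection onto a convex set, expanding the square gives
\[
\twonorm{\theta_t^{(i+1)} - \theta^*_t}^2 \le \twonorm{\theta_t^{(i)} - \theta^*_t}^2 - 2\alpha\iprod{\wt{g}_\theta(t,i)}{\theta_t^{(i)} - \theta^*_t} + \alpha^2\twonorm{\wt{g}_\theta(t,i)}^2.
\]
Rearranging and taking the conditional expectation $\EEti{\cdot}$ --- under which $\theta_t^{(i)}$ and $\theta^*_t$ are measurable and $\EEti{\wt{g}_\theta(t,i)} = g$ --- converts the inner product into $\iprod{g}{\theta_t^{(i)} - \theta^*_t}$. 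Summing over $i=1,\dots,K$, invoking the tower rule so the divergence terms telescope, discarding the nonnegative terminal contribution $\EEt{\twonorm{\theta_t^{(K+1)} - \theta^*_t}^2}\ge 0$, and using the initialization $\theta_t^{(1)} = \theta_{t-1}$ yields
\[
\sum_{i=1}^K \EEt{\iprod{g}{\theta_t^{(i)} - \theta^*_t}} \le \frac{\twonorm{\theta_{t-1} - \theta^*_t}^2}{2\alpha} + \frac{\alpha}{2}\sum_{i=1}^K\EEt{\twonorm{\wt{g}_\theta(t,i)}^2}.
\]

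The final step is where the affine structure pays off. Because $g$ is a fixed vector (independent of the inner-loop index), linearity of the inner product lets me pull the averaging inside exactly: $\sum_{i=1}^K \iprod{g}{\theta_t^{(i)} - \theta^*_t} = K\iprod{g}{\theta_t - \theta^*_t}$, using $\theta_t = \tfrac{1}{K}\sum_{i=1}^K \theta_t^{(i)}$. Dividing by $K$ and recalling the definition of $g$ gives the stated bound, with the left-hand side read in the conditional expectation $\EEt{\cdot}$, matching its use in Equation~\eqref{eq:SGDbound}.

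I do not anticipate a genuine obstacle, as this is a standard SGD argument; the only points demanding care are bookkeeping ones. The main subtlety is the conditioning structure: one must check that $\wt{g}_\theta(t,i)$ is unbiased for the \emph{same} vector $g$ at every $i$ (guaranteed by Lemma~\ref{lem:gtheta} precisely because the gradient is constant in $\theta$), and that $\theta^*_t$ and $g$ are $\F_{t-1}$-measurable so that they may be treated as fixed throughout the inner loop. Worth emphasizing is that the passage from the per-iterate gaps to the gap of the averaged iterate $\theta_t$ is an \emph{exact equality} rather than a Jensen-type inequality --- a consequence of the affine dependence of $\LL$ on $\theta$ --- which is exactly what lets the compactly represented averaged parameter inherit the guarantee without any loss.
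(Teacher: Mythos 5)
Your proof is correct and follows essentially the same route as the paper's: the standard projected-SGD one-step inequality via nonexpansiveness of $\Pi_{\bb{d}{\thbound}}$, conditional unbiasedness from Lemma~\ref{lem:gtheta}, telescoping over the inner loop from $\theta_t^{(1)}=\theta_{t-1}$, and dividing by $K$ using the definition of $\theta_t$ as the average iterate. Your added remarks --- that the gradient is constant in $\theta$ so the averaged iterate's gap equals the average of the per-iterate gaps exactly, and that the bound should be read under $\EEt{\cdot}$ as in Equation~\eqref{eq:SGDbound} --- are accurate clarifications of points the paper leaves implicit.
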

\begin{proof}
The proof follows from classic calculations that can be familiar from references like \citet{NY83} or \citet{Zin03}. We will use $\F_{t,i}$ to denote the sigma-algebra generated by the interaction history up to iteration $i-1$ within the inner loop for updating the dual variables in round $t$. We will use the shorthand notation $\EEti{\cdot} = \EEcc{\cdot}{\F_{t,i}}$ to signify expectations conditioned on the history up to this point. Using the definition of $\theta_{t}^{(i+1)}$ in Algorithm~\ref{alg:Main}, the following bound holds for any $\theta^{*}_{t}\in \bb{b}{d}$, $i$ and $t$:
		\begin{align*}
			\|\theta_{t}^{(i+1)} - \theta^{*}_{t}\|_{2}^{2}
			&= \|\Pi_{\bb{b}{d}}(\theta_{t}^{(i)} - \alpha \Tilde{g}_{\theta}(t,i)) - \theta^{*}_{t}\|_{2}^{2}\\
			&\leq \|\theta_{t}^{(i)} - \theta^{*}_{t} - \alpha \Tilde{g}_{\theta}(t,i)\|_{2}^{2}\\
			&= \|\theta_{t}^{(i)} - \theta^{*}_{t}\|_{2}^{2}
			-2\alpha\biprod{\theta_{t}^{(i)} - \theta^{*}_{t}}{\Tilde{g}_{\theta}(t,i)}
			+\alpha^{2}\|\Tilde{g}_{\theta}(t,i)\|_{2}^{2},
		\end{align*}
		where the inequality follows from the fact that the projection operator is a nonexpansion with respect to the Euclidean norm and the rest follows from straightforward manipulations. After reordering, taking conditional expectations on both sides, and observing that $\EEs{\wt{g}_{\theta}(t,i)}{t,i} = \Phi\transpose u_{t} - \Phi\transpose\U\transpose\lambda_{t}$, we obtain
		\[
		\biprod{\theta_{t}^{(i)} - \theta^{*}_{t}}{\Phi\transpose u_{t} - \Phi\transpose\U\transpose\lambda_{t}}
		\leq \frac{\|\theta_{t}^{(i)} - \theta^{*}_{t}\|_{2}^{2} - \EEs{\|\theta_{t}^{(i+1)} - \theta^{*}_{t}\|_{2}^{2}}{t,i}}{2\alpha} + \frac{\alpha}{2}\EEs{\|\Tilde{g}_{\theta}(t,i)\|_{2}^{2}}{t,i}
		\]
		Then, summing over $i$ and taking the marginal expectation gives
		\[
			\EE{\sum_{i=1}^K \biprod{\theta_{t}^{(i)} - \theta^{*}_{t}}{\Phi\transpose u_{t} - \Phi\transpose\U\transpose\lambda_{t}}} \le \frac{\|\theta_{t}^{(1)} - \theta^{*}_{t}\|_{2}^{2}}{2\alpha} + \frac{\alpha}{2}\sum_{i=1}^K\EE{\|\Tilde{g}_{\theta}(t,i)\|_{2}^{2}}
		\]
Dividing both sides by $K$ and recalling the definition of $\theta_t$ concludes the proof.
\end{proof}

\subsection{The full proof of Lemma~\ref{lem:duality-to-suboptimality}}\label{app:duality-to-suboptimality}
\begin{proof}
	Let $Q_t^* = \Phi\theta^*_t$ and $\delta^*_t = Q^{\pi_t} - Q^*_t$, and also $\theta'_t = \argmin_{\theta\in\B(\thbound)} \iprod{(\U\transpose\B\transpose - I)\mu^*}{r + \gamma PV_t - \Phi\theta}$ and $\delta_t = r + \gamma PV_t - \Phi\theta'_t$. We start by rewriting each term in the definition of the dynamic duality gap. First, we consider the following term:
	\begin{align*}
		\LL(\lambda^{*}, u^{*}; \theta_{t}, V_{t}) &= \iprod{\B\transpose\mu^{*}}{\U\pa{r + \gamma PV_{t} - Q_{t}}} +(1 - \gamma)\iprod{\nu_{0}}{V_{t}} + \iprod{\mu^{*}}{Q_{t} - EV_{t}}
		\\
		&= \iprod{\U\transpose\B\transpose\mu^{*}}{r + \gamma PV_{t} - Q_{t}} + \iprod{\mu^{*}}{Q_{t} - \gamma PV_{t}}
		\\
		&= \iprod{\U\transpose\B\transpose\mu^{*}}{\Phi\pa{\theta_t - \theta_t'} + \delta_t} - \iprod{\mu^{*}}{\Phi\pa{\theta_t - \theta_t'} + \delta_t } + \iprod{\mu^*}{r}
		\\
		&= \iprod{\mu^*}{r} + \iprod{\pa{\U\transpose\B\transpose - I}\mu^{*}}{\delta_t} +\iprod{\mu^*}{\coredelta(\theta_t-\theta_t')},
	\end{align*}
	where in the last line we have used Assumption~\ref{ass:Core_SA_approx} that implies $\B\U\Phi = \Phi-\coredelta$.
	The last term can be bounded as $\iprod{\mu^*}{\coredelta(\theta_t-\theta_t')} \le \iprod{\mu^*}{\coreeps}\twonorm{\theta_t-\theta_t'} \le 2 \thbound\iprod{\mu^*}{\coreeps}$. Finally, the term in the middle can be bounded as follows, using the definition of the Inherent Bellman Error (Definition~\ref{def:IBE}):
	\begin{equation}
		\begin{split}\label{eq:IBE_refined}
			\iprod{\pa{\U\transpose\B\transpose - I}\mu^{*}}{\delta_t} &= \iprod{\pa{\U\transpose\B\transpose - I}\mu^{*}}{r + \gamma PV_t - \Phi\theta'_t} 
			\\
			&= \iprod{\pa{\U\transpose\B\transpose - I}\mu^{*}}{\BE_{\pi_t}(\theta'_t, \theta_t)}
			\\
			&= \min_{\theta\in\bb{\thbound}{d}} \iprod{\pa{\U\transpose\B\transpose - I}\mu^{*}}{\BE_{\pi_t}(\theta, \theta_t)} \le 2\IBE.
		\end{split}
	\end{equation}
	
	On the other hand, we have
	\begin{align*}
		&\LL(\lambda_{t},u_{t}; \theta^{*}_{t}, V^{*}_{t}) 
		= 
		\iprod{\lambda_{t}}{\U \pa{r + \gamma PV^{*}_t - Q_{t}^{*}}} + (1 - \gamma)\iprod{\nu_{0}}{V^{*}_t}
		+ \iprod{u_{t}}{Q_{t}^{*} - EV^{*}_t}
		\\
		&\quad= \iprod{\lambda_{t}}{\U \pa{r + \gamma PV^{\pi_t} - Q^{\pi_t} + \delta^*_t}} 
		+ (1 - \gamma)\iprod{\nu_{0}}{V^{\pi_t}}
		+ \iprod{u_{t}}{Q^{\pi_t} - EV^{\pi_t} - \delta^*_t}
		\\
		&\quad=\iprod{\mu^{\pi_t}}{r} + \iprod{\U\transpose\lambda_t - u_t}{\delta^*_t},
	\end{align*}
	where the last step follows from observing that $(1 - \gamma)\iprod{\nu_{0}}{V^{\pi_t}} = \iprod{\mu^{\pi_t}}{r}$ by the definition of the value function and the occupancy measure, and also that $\iprod{u_t}{EV^{\pi_t}} = \iprod{u_t}{Q^{\pi_t}}$ and that the value functions satisfy the Bellman optimality equations $Q^{\pi_{t}} = r + \gamma PV^{\pi_t}$. In order to bound the last term, notice that by the definition of the Q-evaluation error in Definition~\ref{def:QAE}, we have $\infnorm{\delta_t^*}\le \varepsilon_{\pi_t}$, and thus we have $\iprod{\U\transpose\lambda_t - u_t}{\delta^*_t}\le 2 \varepsilon_{\pi_t}$.
	
	Putting the above bounds together, we can conclude that the dynamic duality gap at the chosen set of comparators can be bounded as
	\begin{align*}
		\GG_T\pa{\lambda^*,\mu^*;\theta^*_{1:T},V^*_{1:T}}
		&\ge \frac{1}{T}\sum_{t=1}^T \pa{\iprod{\mu^{*} - \mu^{\pi_{t}}}{r} - 2\varepsilon_{\pi_t}} - 2\IBE - 2\thbound \iprod{\mu^*}{\coreeps}
		\\
		&= \EE{\iprod{\mu^{*} - \mu^{\piout}}{r}} - 2\EE{\varepsilon_{\piout}} - 2\IBE - 2\thbound \iprod{\mu^*}{\coreeps}.
	\end{align*}
	Reordering the terms completes the proof.
\end{proof}

\end{document}